\newcommand{\ep}{\epsilon}
\newtheorem{thm}{Theorem}[section]
\newtheorem{lmm}[thm]{Lemma}
\newtheorem{cor}[thm]{Corollary}
\theoremstyle{definition}
\newcommand{\ee}{\mathbb{E}}
\newcommand{\ma}{\mathcal{A}}
\newcommand{\pp}{\mathbb{P}}
\newcommand{\rr}{\mathbb{R}}
\newcommand{\tr}{\operatorname{Tr}}
\newcommand{\mb}{\mathcal{B}}
\numberwithin{equation}{section}
\newtheorem{rmk}[thm]{Remark}
\renewcommand{\tilde}{\widetilde}
\begin{document}
\title{Convergence of gradient descent for deep neural networks}

\author{Sourav Chatterjee\thanks{Department of Statistics, Stanford University, USA. Email: \href{mailto:souravc@stanford.edu}{\tt souravc@stanford.edu}. 
}}
\affil{Stanford University}


\maketitle

\begin{abstract}
We give a simple \emph{local Polyak--{\L}ojasiewicz} (PL) criterion that guarantees linear (exponential) convergence of gradient flow and gradient descent to a zero-loss solution of a nonnegative objective.
We then verify this criterion for the squared training loss of a feedforward neural network with smooth, strictly increasing activation functions, in a regime that is complementary to the usual over-parameterized analyses: the network width and depth are fixed, while the input data vectors are assumed to be linearly independent (in particular, the ambient input dimension is at least the number of data points). A notable feature of the verification is that it is \emph{constructive}: it leads to a simple ``positive'' initialization (zero first-layer weights, strictly positive hidden-layer weights, and sufficiently large output-layer weights) under which gradient descent provably converges to an interpolating global minimizer of the training loss.
We also discuss a probabilistic corollary for random initializations, clarify its dependence on the probability of the required initialization event, and provide numerical experiments showing that this theory-guided initialization can substantially accelerate optimization relative to standard random initializations at the same width.
\newline
\newline
\noindent {\scriptsize {\it Key words and phrases.} Nonconvex optimization, gradient descent, deep learning, neural networks}
\newline
\noindent {\scriptsize {\it 2020 Mathematics Subject Classification.} Primary 65K10; Secondary 90C26, 68T07.}
\end{abstract}

\section{A convergence criterion for gradient descent}\label{introsec}
\subsection{Background}
The goal of gradient descent is to find a minimum of a differentiable function $f:\rr^p\to \rr$ by starting at some $x_0\in \rr^p$, and then iteratively moving in the direction of steepest descent, as
\[
x_{k+1} = x_k -\eta_k \nabla f(x_k),
\]
where $\nabla f$ is the gradient of $f$, and the step sizes $\eta_k$ may remain fixed or vary with $k$. This is the discretization of the continuous-time gradient flow $\phi:[0,\infty)\to \rr^p$, which is the solution of the differential equation
\[
\frac{d}{dt}\phi(t) = -\nabla f(\phi(t))
\]
with $\phi(0)=x_0$. Gradient descent and its many variants are indispensable tools in all branches of science and engineering, and particularly in modern machine learning and data science. The convergence properties of gradient descent are well-understood when the objective function $f$ is convex~\cite{nesterov04, boydvandenberghe04}, and in general nonconvex optimization, even \emph{certifying} local optimality can be computationally intractable (NP-hard, and related decision problems are NP-complete); see, for example,~\cite{murtykabadi87}.  Of course, for many ``tame'' objectives gradient methods do converge to critical points; the point here is that worst-case guarantees for locating (or certifying) good minima of arbitrary nonconvex functions are necessarily limited.
In spite of this, gradient descent is widely used in practice to find local and global minima in highly nonconvex problems, especially in high dimensions. For example, it has been observed that gradient descent can often find global minima of training loss in deep learning~\cite{zhangetal21, goodfellowetal14}, which is one of the reasons behind great success of the deep learning revolution~\cite{lecunetal15,goodfellowetal14}.

Beyond worst-case guarantees, one motivation for our results is to obtain convergence theory that is not merely descriptive but \emph{prescriptive}: it can suggest concrete algorithmic choices.
In the neural-network application, the sufficient conditions can be verified by analyzing a local lower bound on the empirical tangent kernel, and this yields a particularly simple, sign-constrained starting point.
In particular, the ``positive'' initialization singled out by the proof (zero first-layer weights, positive hidden-layer weights, and a suitably scaled output layer) is easy to implement.
Our numerical experiments show that this theory-guided start can dramatically speed up training compared to standard random initializations at the same (small) width, serving as a proof-of-principle that optimization theory can be used to design effective initializations.

This article presents a novel criterion for convergence of gradient descent to a global minimum. The criterion is related to (and maybe seen as a strengthening of) the classical Kurdyka--{\L}ojasiewicz inequality~\cite{attouchetal10, lojasiewicz63, kurdyka98}. It is also related to results from nonsmooth analysis, such as those in \cite{ioffe00, drusvyatskiyetal15, bolteetal10, corvellecmotreanu08}. Our proof idea bears close resemblance with the classical `{\L}ojasiewicz trapping argument'~\cite{absiletal05, kurdyka98}. Nevertheless, the convergence criterion stated below has not appeared in this exact form previously in the literature.

\subsection{Main results}
 Let $p$ be a positive integer. Let $f: \rr^p\to [0,\infty)$ be a nonnegative $C^2$ function. Take any $x_0\in \rr^p$ and $r>0$. Let $B(x_0,r)$ denote the closed Euclidean ball of radius $r$ centered at $x_0$. Define
\[
\alpha(x_0,r) := \inf_{x\in B(x_0,r), \, f(x)\ne 0} \frac{|\nabla f(x)|^2}{f(x)},
\]
where  $|\nabla f(x)|$ is the Euclidean norm of $\nabla f(x)$. If $f(x)=0$ for all $x\in B(x_0,r)$, then we let $\alpha(x_0,r):=\infty$. Our main assumption is that for some $r>0$, 
\begin{align}\label{mainassump}
4f(x_0) < r^2\alpha(x_0,r). 
\end{align}
The quantity $\alpha(x_0,r)$ in~\eqref{mainassump} is a \emph{local PL constant}: on the ball $B(x_0,r)$, the inequality $|\nabla f(x)|^2 \ge \alpha(x_0,r) f(x)$ is exactly a local version of the Polyak--{\L}ojasiewicz (PL) or \emph{gradient-dominance} condition (compare, e.g.,~\cite{lojasiewicz63, polyak63, kariminutini16}). In particular, it implies a Kurdyka--{\L}ojasiewicz inequality on the same ball with exponent $\frac{1}{2}$, but it is \emph{stronger} than a generic KL inequality because it controls the objective value linearly.

Under the assumption \eqref{mainassump}, we have two results. The first result, stated below, shows that the gradient flow started at $x_0$  converges exponentially fast to a global minimum of $f$ in $B(x_0,r)$ where $f$ is zero. The existence of a global minimum in $B(x_0,r)$ is a part of the conclusion, and not an assumption. Since $f$ is assumed to be nonnegative, any point $x^*$ with $f(x^*)=0$ achieves the minimum value of $f$ and is therefore a (possibly non-unique) \emph{global minimizer} of $f$; our theorems guarantee that such a point lies within the ball and is reached from $x_0$. The proof is in \textsection\ref{thm2proof}.
\begin{thm}\label{mainthm2}
Let $f$, $x_0$, and $\alpha$ be as above. Assume that \eqref{mainassump} holds for some $r>0$, and let $\alpha := \alpha(x_0,r)$. Then there is a unique solution of the gradient flow equation
\[
\frac{d}{dt} \phi(t) = -\nabla f(\phi(t))
\]
on $[0,\infty)$ with $\phi(0)=x_0$, and this flow stays in $B(x_0,r)$ for all time, and converges to some $x^*\in B(x_0,r)$ where $f(x^*)=0$. Moreover, for each $t\ge 0$, we have
\[
|\phi(t) - x^*|\le re^{-\alpha t/2} \ \ \text{ and } \ \  f(\phi(t))\le e^{-\alpha t} f(x_0). 
\]
\end{thm}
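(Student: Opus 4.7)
Plan: The proof rests on combining two identities valid along the gradient flow: the energy identity $\frac{d}{dt} f(\phi(t)) = -|\nabla f(\phi(t))|^2$, and the Kurdyka--{\L}ojasiewicz-type lower bound $|\nabla f(x)|^2 \ge \alpha f(x)$ baked into the definition of $\alpha$, valid for all $x\in B(x_0,r)$. First, since $f\in C^2$ implies $\nabla f$ is locally Lipschitz, Picard--Lindel\"of gives local existence and uniqueness of $\phi$ on some maximal interval. Let $T\le \infty$ denote the supremum of times for which the unique solution exists and $\phi([0,T))\subseteq B(x_0,r)$. The entire argument amounts to showing that $T=\infty$, and to controlling $\phi$ on $[0,T)$.

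For $t\in[0,T)$, combining the two identities gives $\frac{d}{dt}f(\phi(t))\le -\alpha f(\phi(t))$, which by Gr\"onwall yields the exponential energy decay $f(\phi(t))\le e^{-\alpha t} f(x_0)$. The key step is to upgrade this to an arc-length bound. Wherever $f(\phi(t))>0$, the chain rule gives
\[
\frac{d}{dt}\sqrt{f(\phi(t))} = -\frac{|\nabla f(\phi(t))|^2}{2\sqrt{f(\phi(t))}} \le -\frac{\sqrt{\alpha}}{2}\,|\nabla f(\phi(t))| = -\frac{\sqrt{\alpha}}{2}\,|\phi'(t)|,
\]
where both inequalities use $|\nabla f|^2\ge \alpha f$ on $B(x_0,r)$. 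Integrating from $s$ to $t$ and rearranging,
\[
\int_s^t |\phi'(u)|\,\d u \;\le\; \frac{2}{\sqrt{\alpha}}\bigl(\sqrt{f(\phi(s))} - \sqrt{f(\phi(t))}\bigr) \;\le\; \frac{2}{\sqrt{\alpha}}\sqrt{f(\phi(s))}\;\le\; \frac{2\sqrt{f(x_0)}}{\sqrt{\alpha}}\,e^{-\alpha s/2}.
\]

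Setting $s=0$ gives $|\phi(t)-x_0|\le 2\sqrt{f(x_0)}/\sqrt{\alpha} < r$ by the hypothesis $4f(x_0)<r^2\alpha$. Thus the flow never reaches the boundary of $B(x_0,r)$; combined with the fact that the solution cannot blow up inside a bounded set, this forces $T=\infty$. The arc-length estimate also makes $\{\phi(t)\}_{t\ge 0}$ Cauchy in $\R^p$, so $\phi(t)\to x^*$ for some $x^*\in B(x_0,r)$; the energy bound then gives $f(x^*)=\lim f(\phi(t))=0$ by continuity. Finally, letting $t\to\infty$ in the arc-length inequality with $s$ as the time variable delivers $|\phi(s)-x^*|\le (2\sqrt{f(x_0)}/\sqrt{\alpha})e^{-\alpha s/2}\le re^{-\alpha s/2}$.

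The only genuinely delicate point, and the main thing to handle carefully, is behavior where $f(\phi(t))$ vanishes: there $\sqrt{f(\phi(t))}$ is only one-sided differentiable and the displayed chain-rule computation is not literally valid. The clean resolution is to observe that if $f(\phi(t_0))=0$ then $\phi(t_0)$ is a global minimum, so $\nabla f(\phi(t_0))=0$ and the flow is stationary on $[t_0,\infty)$ by uniqueness; hence one can carry out all integrations on the open set $\{t: f(\phi(t))>0\}$ (which is an initial interval $[0,t_0)$ or all of $[0,\infty)$) and extend by continuity. No other obstacles arise; everything else is Gr\"onwall and the triangle inequality.
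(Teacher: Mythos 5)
Your proof is correct, and it achieves the paper's result by a slightly more direct route at the one genuinely nontrivial step. Both arguments hinge on the same Lyapunov function $g(s)=\sqrt{f(\phi(s))}$ and the same identity $g'(s)=-|\nabla f(\phi(s))|^2/(2g(s))$, and both arrive at the same arc-length bound $2\sqrt{f(x_0)/\alpha}$. The difference is in how $\alpha$ enters. The paper applies Cauchy--Schwarz to $\int|\nabla f(\phi)|$, splitting it as $\sqrt{2g}\cdot|\nabla f(\phi)|/\sqrt{2g}$, bounds the second factor by $\int -g' = g(0)-g(t_0)$, and then uses the Gr\"onwall-type exponential decay of $f(\phi(s))$ to bound $\int 2g(s)\,ds$; the role of $\alpha$ is played indirectly through that exponential rate. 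You instead plug the pointwise {\L}ojasiewicz bound $|\nabla f|^2\ge\alpha f$ directly into the expression for $g'$, giving $g'(s)\le -\frac{\sqrt\alpha}{2}|\phi'(s)|$ in one line, and integrate. This bypasses Cauchy--Schwarz entirely and is closer to the classical {\L}ojasiewicz trapping computation. Your version also cleanly separates what is needed for trapping (just $g$ nonincreasing) from what is needed for the exponential rate (the Gr\"onwall bound), which the paper entangles. You correctly identify and resolve the same delicate point as the paper: where $f(\phi(t))=0$, $g$ is not differentiable, but then $\nabla f(\phi(t))=0$, the flow freezes by uniqueness, and the integration can be restricted to the initial interval where $f(\phi)>0$. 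One small thing worth making explicit in a write-up: the inequality $|\nabla f|^2\ge\alpha f$ is only guaranteed inside $B(x_0,r)$, so the differential inequality for $g$ is only valid a priori on $[0,T)$ where $T$ is defined via the ball-containment condition; your bootstrapping definition of $T$ handles this, but the circularity deserves a sentence.
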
 
Our second result is the analogue of Theorem \ref{mainthm2} for gradient descent. It says that under  the condition \eqref{mainassump}, gradient descent started at $x_0$, with a small enough step size, converges to a global minimum of $f$ in $B(x_0,r)$. Again, the existence of a global minimum in $B(x_0,r)$ is a part of the conclusion. This result is proved in \textsection\ref{thm1proof}.
\begin{thm}\label{mainthm1}
Let $f$, $x_0$, and $\alpha$ be as above. Assume that~\eqref{mainassump} holds for some $r>0$, and let $\alpha := \alpha(x_0,r)$. Define
\[
\rho := \sqrt{\frac{4f(x_0)}{r^2\alpha}}\,,
\]
so that $\rho\in (0,1)$ by~\eqref{mainassump}. Fix any $\ep$ satisfying $0<\ep<1-\rho$, and define
\[
L_1 := \sup_{x\in B(x_0,r)} \max_{1\le i\le p} \bigl|\partial_i f(x)\bigr|
\qquad\text{and}\qquad
L_2 := \sup_{x\in B(x_0,2r)} \max_{1\le i,j\le p} \bigl|\partial_{i} \partial_jf(x)\bigr|.
\]
Choose any step size $\eta>0$ such that
\[
\eta \le \min \biggl\{\frac{r}{L_1\sqrt{p}}, \, \frac{2\ep}{L_2p}\biggr\},
\]
and iteratively define
\[
x_{k+1} := x_k - \eta \nabla f(x_k)
\]
for each $k\ge 0$. Then $x_k\in B(x_0,r)$ for all $k$, and as $k\to\infty$, $x_k$ converges to a point $x^*\in B(x_0,r)$ where $f(x^*)=0$. Moreover, with
\[
\kappa := \min\{1, (1-\ep)\alpha \eta\},
\]
we have for each $k\ge 0$ that
\[
|x_k-x^*|\le (1-\kappa)^{k/2} r \ \ \text{ and } \ \ f(x_k)\le (1-\kappa)^kf(x_0). 
\]
\end{thm}
Note that the above results have nothing to say about variants of gradient descent, such as stochastic gradient descent. Adding a stochastic component to the gradient descent algorithm has various benefits, such as helping it escape saddle points~\cite{jinetal17}. Since it is known that stochastic gradient methods often asymptotically follow the path of a differential equation~\cite{duchiruan18}, it would be interesting to see if analogues of Theorems~\ref{mainthm2} and~\ref{mainthm1} can be proved for stochastic gradient descent. We also do not have anything to say about algorithms that aim to find critical points instead of global minima in nonconvex problems, such as the ones surveyed in \cite{danilovaetal20, carmonetal18}. For a  survey of the many variants of stochastic gradient descent and their applications in machine learning, see~\cite{netrapalli19}. For a comprehensive account of all variants of gradient descent, see~\cite{ruder16}. For some essential limitations of nonconvex optimization, see~\cite{arjevanietal19}.

\begin{rmk}[Beyond $C^2$ smoothness]
Theorems~\ref{mainthm2} and~\ref{mainthm1} are stated for $C^2$ objectives because our discrete-time analysis uses a second-order Taylor expansion and uniform bounds on the Hessian on a neighborhood of the iterates.
For nonsmooth problems, one typically replaces the gradient by a suitable generalized gradient (for example, the limiting subdifferential) and works with a \emph{descent lemma} for a specific algorithm (proximal gradient, forward--backward splitting, alternating minimization, etc.).
Many nonsmooth objectives of interest in optimization are \emph{weakly convex} or \emph{prox-regular}, and many such functions satisfy a Kurdyka--{\L}ojasiewicz (KL) inequality; see, e.g.,~\cite{attouchetal10, bolteetal10, drusvyatskiyetal15}.
A continuous-time analogue of Theorem~\ref{mainthm2} can often be obtained in that setting by combining a KL inequality with a trapping argument.
Extending Theorem~\ref{mainthm1} in a comparably clean way to nonsmooth objectives would require additional machinery and is beyond the scope of this paper.
\end{rmk}

\subsection{How the criterion will be used for neural networks}\label{subsec:roadmap}
The application in \textsection\ref{deepsec} proceeds by verifying~\eqref{mainassump} for the squared training loss $S(w)$ of a neural network.
A key observation is that for a least-squares objective of the form
\[
S(w)=\frac1n\sum_{i=1}^n (y_i-f(x_i,w))^2,
\]
the ratio appearing in the definition of $\alpha(\cdot,\cdot)$ can be written in terms of the \emph{empirical tangent kernel} (or empirical NTK)
\[
H(w) := (\nabla f(x_i,w)^T \nabla f(x_j,w))_{1\le i,j\le n},
\]
where $u^T$ denotes the transpose of a column vector $u$. 
Indeed, letting $z(w):=(y_1-f(x_1,w),\ldots,y_n-f(x_n,w))^T \in\rr^n$, one checks that
\[
|\nabla S(w)|^2 = \frac{4}{n^2} z(w)^T H(w) z(w)
\quad\text{and}\quad
S(w)=\frac1n|z(w)|^2,
\]
so that
\[
\frac{|\nabla S(w)|^2}{S(w)} = \frac{4}{n}\frac{z(w)^T H(w) z(w)}{|z(w)|^2}
\ge \frac{4}{n}\lambda_{\min}(H(w)),
\]
where $\lambda_{\min}(M)$ denotes the minimum eigenvalue of a Hermitian matrix $M$. 
Consequently, a sufficient route to~\eqref{mainassump} is to (i) find a lower bound for $\lambda_{\min}(H(w))$ on a ball around the initialization $w_0$, and (ii) choose the initialization so that $S(w_0)$ is small relative to that lower bound.
In \textsection\ref{deepsec}, we obtain such a lower bound in a high-dimensional data regime by controlling the first-layer gradients and using the positivity of certain products of weights.

\section{Application to deep neural networks}\label{deepsec} 
\subsection{Feedforward neural networks}
A feedforward neural network has the following components:
\begin{itemize}
\item A positive integer $L$, which denotes the number of layers. It is sometimes called the depth of the network. The number $L-1$ denotes the `number of hidden layers'. To avoid trivialities, we will assume that $L\ge 2$; i.e., there is at least one hidden layer. 
\item A positive integer $d$, called the `input dimension'. The input data take value in $\rr^d$.
\item A sequence of positive integers $d_1,\ldots,d_L$, denoting the dimensions of layers $1,\ldots,L$. The maximum of $d_1,\ldots,d_L$ is called the width of the network. The dimension of the `output layer', $d_L$, is often taken to be $1$. We will henceforth take $d_L = 1$. 
\item A sequence of `weight matrices' $W_1,\ldots, W_L$ with real entries, where $W_\ell$ has dimensions $d_\ell\times d_{\ell-1}$ (with $d_0:=d$).
\item A sequence of `bias vectors' $b_1,\ldots, b_L$, where $b_\ell\in \rr^{d_\ell}$. 
\item A sequence of `activation functions' $\sigma_1,\ldots,\sigma_L :\rr \to \rr$.  Usually, the activation function for the output layer, $\sigma_L$, is taken to be the identity map, i.e., $\sigma_L(x)=x$.  We will henceforth take $\sigma_L$ to be the identity map. 
\end{itemize}
The feedforward neural network with the above components is defined as follows. Denote $w := (W_1,b_1,\ldots,W_L,b_L)$, viewing it as a vector in $\rr^p$, where 
\begin{align}\label{pdef}
p := \sum_{\ell=1}^L d_\ell (d_{\ell-1} + 1). 
\end{align}
Given a value of $w$, the neural network defines a map $f(\cdot,w)$ from  the input space $\rr^d$ into the output space $\rr$, as
\[
f(x,w) := \sigma_L(W_L\sigma_{L-1}(\cdots W_2 \sigma_1(W_1 x+b_1) + b_2\cdots)+b_L),
\]
where the activation functions $\sigma_1,\ldots ,\sigma_L$ act componentwise on vectors of dimensions $d_1,\ldots,d_L$.

Suppose that we have a neural network as above, and some input data $x_1,\ldots,x_n\in \rr^d$ and output data $y_1,\ldots,y_n\in \rr$. Suppose that we want to `fit' the model to the data by trying to minimize the squared error loss
\begin{align}\label{swdef}
S(w) := \frac{1}{n}\sum_{i=1}^n (y_i - f(x_i,w))^2
\end{align}
We study full-batch gradient flow and gradient descent on the training objective~\eqref{swdef}, namely
\[
\frac{d}{dt}w(t) = -\nabla S(w(t))
\qquad\text{and}\qquad
w_{k+1} = w_k - \eta \nabla S(w_k),
\]
under smoothness assumptions on the activation functions that ensure $S$ is $C^2$.

\subsection{A high-dimensional data regime}\label{subsec:highdim}
Our main convergence theorems below assume that the inputs $x_1,\ldots,x_n$ are linearly independent.
In particular, this requires that the ambient input dimension satisfies $d\ge n$.
This is a \emph{different} regime from the by-now standard ``over-parameterized'' analyses of neural-network training, where one typically keeps $d$ fixed and takes the \emph{width} (and hence the number of parameters) to scale with $n$.

The $d\ge n$ assumption can be natural in high-dimensional statistics (small $n$, large $d$), or after applying a high-dimensional feature map/embedding to the inputs.
It is also worth emphasizing what this paper does \emph{not} claim: we do not provide a guarantee for generic modern deep-learning practice in the regime $n\gg d$.
Rather, our contribution is to show that in the complementary high-dimensional setting, \emph{width need not grow with $n$} in order for gradient descent to provably reach an interpolating solution.

In fact, linear independence of the inputs already implies strong interpolation capacity for very small networks.
For example, when $L=2$ and $d_1=1$, the map $x\mapsto f(x,w)$ is of the form $x\mapsto a \sigma_1(u^T x)$.
Since $X:=[x_1\; x_2 \;\cdots \;  x_n]$ has full column rank, one may prescribe arbitrary scalars $t_1,\ldots,t_n$ and solve $u^T x_i=t_i$ simultaneously; choosing $a$ appropriately then allows exact interpolation of the labels $(y_i)$ (because $\sigma_1$ is strictly monotone).
Our results show that, for \emph{any} fixed depth $L$ and fixed hidden-layer dimensions $d_1,\ldots,d_{L-1}$, gradient descent can be initialized so as to find such an interpolating solution, and does so with a linear rate.


\subsection{Our results for feedforward neural networks}\label{subsec:related}
There is an enormous body of work on optimization for neural networks.
We only summarize the strands that are the most relevant for the results stated below, and refer to the survey~\cite{sun19} (and its references) for broader background. For convex and linear models, convergence results for gradient-type methods are classical; for example, convex neural networks were analyzed in~\cite{bachmoulines13, bach17, moulinesbach11}, and deep linear networks in~\cite{aroraetal18, bartlettetal18, jitelgarsky18}.
For general (nonconvex, nonlinear) networks, many recent global-convergence guarantees are proved in an over-parameterized regime where the hidden layers are very wide and the empirical tangent kernel $H(w)$ is well conditioned and remains stable along training.
Representative results include~\cite{duetal18, allenzhuetal19b, liliang18, wuetal18, eetal20, soltanolkotabietal18, hardtma16, duetal19, jacotetal18, zouetal20, caogu19, caogu20, sankararamanetal20, liuetal22}; see also the discussion in~\cite{jentzenriekert21}.
There are also works pushing beyond the ``infinite-width'' or NTK regime and obtaining convergence under milder (but still width-dependent) conditions, for example~\cite{jentzenriekert21, robinetal22}.

Our theorems below are complementary to the results cited above: we do not assume width scales with $n$, and our lower bound on $\lambda_{\min}(H(w))$ comes instead from the high-dimensional structure of the data (linear independence) together with an explicit initialization that prevents sign cancellations in the relevant products of weights.
Technically, the proofs still proceed by verifying a local PL inequality for $S$ via a lower bound on $\lambda_{\min}(H(w))$, but the mechanism is different from the typical over-parameterized stability arguments.

Finally, because our analysis is based on a PL/KL-type inequality, it is related in spirit to a classical line of work that uses (Kurdyka--) {\L}ojasiewicz gradient inequalities to study convergence of gradient-like dynamics; see, e.g.,~\cite{lojasiewicz63, absiletal05, attouchetal10, bolteetal10, drusvyatskiyetal15}.
More recently, {\L}ojasiewicz-type inequalities have also been used to connect optimization dynamics with generalization in learning problems; see, for example,~\cite{liuetal22}.

Our first result about feedforward neural networks is the following. The proof is in \textsection\ref{deepproof}.
\begin{thm}\label{deepthm}
Consider a feedforward neural network with depth $L\ge 2$, output dimension $d_L=1$, and $\sigma_L=$ identity.
Assume that the activation functions $\sigma_1,\ldots,\sigma_{L-1}$ are twice continuously differentiable, satisfy $\sigma_\ell(0)=0$, and are strictly increasing in the sense that $\sigma_\ell'(x)>0$ for all $x$ and $\ell$. Given input data $x_1,\ldots,x_n\in \rr^d$ and output data $y_1,\ldots,y_n\in \rr$, define the squared loss $S$ as in~\eqref{swdef}.
Assume that the input vectors $x_1,\ldots,x_n$ are linearly independent, and let
\[
\lambda_X := \lambda_{\min}\biggl(\frac{1}{n}X^T X\biggr) >0,
\]
where $X$ is the $d\times n$ matrix whose $i^{\textup{th}}$ column is $x_i$. Let $w_0=(W_1,b_1,\ldots,W_L,b_L)$ be an initialization such that $b_1=\cdots=b_L=0$, $W_1=0$, and every entry of each hidden-layer weight matrix $W_2,\ldots,W_{L-1}$ is strictly positive.
Define
\[
\delta := \min_{2\le \ell\le L-1}\ \min_{i,j} (W_\ell)_{ij}
\quad\text{and}\quad
K := \max_{2\le \ell\le L-1}\ \max_{i,j} (W_\ell)_{ij}.
\]
Then there exist explicit constants $A_0>0$ and $\eta_0>0$ (see Remark~\ref{rmk:constants}) depending only on
\[
\lambda_X,\ \delta,\ K,\ \frac{1}{n}\sum_{i=1}^n y_i^2,\ L,\ d_1,\ldots,d_{L-1},\ \text{and the activation functions }(\sigma_\ell)_{\ell=1}^{L-1},
\]
such that if every entry of $W_L$ is \emph{at least} $A_0$, then:
\begin{itemize}
\item the gradient flow $\dot w(t)=-\nabla S(w(t))$ with $w(0)=w_0$ converges to a point $w^*$ with $S(w^*)=0$; and
\item for any step size $0<\eta\le \eta_0$, gradient descent $w_{k+1}=w_k-\eta\nabla S(w_k)$ with $w_0$ also converges to a point $w^*$ with $S(w^*)=0$.
\end{itemize}
\end{thm}

\begin{rmk}[Width versus ambient dimension]\label{rmk:width}
Recall that the width of the network is $m := \max\{d_1,\ldots,d_L\}$ and does \emph{not} include the input dimension $d_0=d$.
Theorem~\ref{deepthm} works in the high-dimensional regime $d\ge n$ (linear independence of the inputs), but places \emph{no} growth requirement on $m$ as a function of $n$.
This is in contrast with much of the over-parameterized literature, where $m$ is taken to scale at least linearly (and often polynomially) with $n$; see, e.g.,~\cite{soltanolkotabietal18, duetal18, duetal19}.
\end{rmk}

\begin{rmk}[Why the condition $W_1=0$ is not vacuous]\label{rmk:w1zero}
A possible source of confusion is the condition $W_1=0$ (and $b_1=\cdots=b_L=0$) in Theorem~\ref{deepthm}.
At the initialization $w_0$ the network output is indeed constant (in fact, $f(x,w_0)=0$ for all $x$ because $\sigma_\ell(0)=0$), so $S(w_0)=\frac1n\sum_i y_i^2$.
However, the gradient at $w_0$ is \emph{not} zero: the first-layer gradients satisfy satisfy (see \eqref{pijfxw}--\eqref{qdef} in \textsection\ref{deepproof})
\[
\frac{\partial}{\partial (W_1)_{rs}} f(x,w_0) = x_s q_r(x,w_0),
\]
where $q_r(x,w_0)$ is a product of the downstream weights and activation slopes.
Under the positivity assumptions, $q_r(x,w_0)\neq 0$, so the update of $W_1$ depends linearly on the data $(x_i,y_i)$ and training proceeds normally.
The role of $W_1=0$ is simply to provide a convenient center point around which we can control the empirical tangent kernel $H(w)$.
\end{rmk}

\begin{rmk}[Why require hidden-layer weights to be positive?]\label{rmk:positive}
The proof goes by finding a lower bound on $\lambda_{\min}(H(w))$ in a neighborhood of the initialization by controlling certain vectors $q_r(x_i,w)$ that appear in the first-layer gradients.
If the hidden-layer weights have mixed signs, then products of weights can cancel and $q_r(x_i,w)$ can be arbitrarily small even when individual weights are not.
Requiring $W_2,\ldots,W_{L-1}$ to have strictly positive entries rules out such cancellations and yields a clean lower bound.
We emphasize that this is a technical assumption used for a worst-case guarantee; it is not meant to model typical random initialization.
\end{rmk}

\begin{rmk}[Activation shifts, biases, and smooth leaky-ReLU]\label{rmk:activations}
The class of activation functions allowed by Theorem~\ref{deepthm} includes many smooth monotone activations used in practice (e.g., $\tanh$ and smooth sigmoids).
The condition $\sigma_\ell(0)=0$ is not restrictive: if an activation $\sigma$ does not satisfy $\sigma(0)=0$, one can replace it by the shifted activation $\tilde\sigma(x):=\sigma(x)-\sigma(0)$ and absorb the resulting constant offset into the biases.
We include biases in the model definition for this reason, although the theorem statements set the biases to zero for notational convenience. The theorem also applies to smooth approximations of ReLU and leaky-ReLU.
One convenient ``smooth leaky-ReLU'' activation with parameter $a\in(0,1)$ is
\[
\sigma(x) = a x + (1-a)\log(1+e^x) - (1-a)\log 2,
\]
which satisfies $\sigma(0)=0$ and has derivative bounded between $a$ and $1$.
This is the activation used in the experiments in \textsection\ref{subsec:experiments} below.
\end{rmk}

\begin{rmk}[Explicit constants in Theorem~\ref{deepthm}]\label{rmk:constants}
The proof yields an explicit (though not optimized) admissible choice of the output-layer scale $A_0$.
Let $r:=\delta/2$ and $K':=K+r$.
Let $M:=\max_{1\le i\le n}\max_{1\le j\le d}|(x_i)_j|$, and for each $\ell=1,\ldots,L-1$ define
\[
\gamma_\ell(x):=\max\{\sigma_\ell(x),\,|\sigma_\ell(-x)|\},\quad x\ge 0.
\]
A crude bound on the magnitudes of the pre-activations on the ball $B(w_0,r)$ can be obtained recursively (see the recursive bounds in \textsection\ref{deepproof}): set
\[
a_1 := Mrd+r
\quad\text{and}\quad
a_\ell := \gamma_{\ell-1}(a_{\ell-1}) K' d_{\ell-1} + r,\quad \ell=2,\ldots,L-1.
\]
Define
\[
c_\ell := \min_{|u|\le a_\ell} \sigma_\ell'(u)>0,\qquad \ell=1,\ldots,L-1,
\]
and write $Y_2:=\frac1n\sum_{i=1}^n y_i^2$.
Then it suffices to take
\[
A_0 := r + 2\biggl(\frac{Y_2}{\lambda_X d_1 r^{2L-2}(d_{L-1}\cdots d_2 \prod_{\ell=1}^{L-1}c_\ell)^2}\biggr)^{1/2}.
\]
Given $A\ge A_0$, one may then take $\eta_0$ to be any step size allowed by Theorem~\ref{mainthm1} applied to $f=S$ with radius $r$ around $w_0$, i.e., in terms of the corresponding $L_1$ and $L_2$ bounds for $S$ on the relevant balls.
\end{rmk}

\begin{rmk}[Relation to ``lazy training'' results]\label{rmk:chizat}
A number of general ``lazy training'' results show that, under suitable scaling assumptions, gradient flow remains close to initialization and behaves like kernel regression with the empirical tangent kernel; see, e.g.,~\cite{chizatetal19}.
Our proof is directly in this spirit: we lower bound $\lambda_{\min}(H(w))$ on a neighborhood of an explicit initialization and then apply the local PL criterion from \textsection\ref{introsec}.
Because our scaling and initialization are quite different from the standard large-width settings, it is not immediate to deduce Theorem~\ref{deepthm} as a black-box corollary of existing lazy-training theorems, and we therefore provide a self-contained argument.
\end{rmk}

Finally, note that Theorem~\ref{deepthm} concerns \emph{training error}: it guarantees convergence to a parameter $w^*$ with $S(w^*)=0$ (an interpolating global minimizer of $S$).
It does not by itself imply any \emph{generalization} guarantee.

Our second result about feedforward neural networks, stated below, shows that when initiated from a standard random initialization, the flow converges to a global minimum with positive probability. The proof is in \textsection\ref{newproof}.
\begin{thm}[A conditional corollary for LeCun Gaussian initialization]\label{newdeepthm}
Consider the setting of Theorem~\ref{deepthm}, and assume in addition that there exist constants $0<C_1\le C_2$ such that
\[
C_1 \le \sigma_\ell'(x)\le C_2 \qquad \text{for all }x\in\rr \text{ and }\ell=1,\ldots,L-1.
\]
Let $\lambda_X$ and $\Lambda_X$ denote the minimum and maximum eigenvalues of $\frac1nX^T X$, respectively. Fix $c>0$ and initialize $b_1=\cdots=b_L=0$, and draw all entries of each $W_\ell$ independently from $\mathcal{N}(0,c/d_{\ell-1})$ for $\ell=1,\ldots,L$ (LeCun-type initialization). Then there exists a step size $\eta_0>0$, depending on $c,C_1,C_2,\lambda_X,\Lambda_X,L,d_1,\ldots,d_L$, such that both 
gradient flow and gradient descent with any $0<\eta\le \eta_0$ started from the initialization $w_0=(W_1,0,\ldots,W_L,0)$ converge to a point $w^*$ with $S(w^*)=0$ with positive probability. More precisely, this convergence is implied by the occurrence of the event
\[
\{\text{for every }\ell=2,\ldots,L,\ \text{every entry of }W_\ell\text{ lies in }[2M,3M]\}\cap \{|W_1|\le M\}
\] 
for some sufficiently large $M$ (where $|W_1|$ denotes the Euclidean norm of $W_1$),  which has positive probability.
\end{thm}

\begin{rmk}[Interpretation of Theorem~\ref{newdeepthm}]
The theorem only shows that the chance of converging to a global minimum, when started from a LeCun initialization, is strictly positive. Thus, repeated trials will eventually yield an instance where the flow converges to a global minimum. It does not give a lower bound on the probability; indeed, the lower bound yielded by the proof is exponentially small in the number of parameters in layers $2,\ldots,L$. If the number of layers is not large and the widths of all layers except the input layer are not large, then the  probability is not low.
\end{rmk}


\section{Numerical experiments}\label{subsec:experiments}
One appealing aspect of the neural network argument is that it produces a concrete initialization strategy rather than only an abstract convergence guarantee.
In this section we illustrate this ``theory-guided initialization'' numerically on small synthetic examples.
The experiments are not meant as performance benchmarks; rather, they show that the proof-motivated starting point can have a visible practical impact on optimization behavior at fixed (and relatively small) width.


\paragraph{Setup.}
We trained a depth-$3$ network (two hidden layers) with the smooth leaky-ReLU activation from Remark~\ref{rmk:activations} with $a=0.1$.
We used full-batch gradient descent on~\eqref{swdef} with step size $\eta=10^{-3}$ for $4000$ iterations, and we report the mean training loss over five random seeds.

\paragraph{Initializations compared.}
We compared:
(i) the deterministic ``positive'' initialization suggested by Theorem~\ref{deepthm} (set $W_1=0$, set each entry of $W_2$ to $\delta=0.5$, and set each entry of $W_3$ to $A=1$);
(ii) a standard LeCun Gaussian initialization (all weights i.i.d.\ $\mathcal{N}(0,1/d_{\ell-1})$);
(iii) a simple positive variant where the Gaussian weights are replaced by their absolute values (entrywise);
and (iv) a ``conditional-event'' variant where $W_2,W_3$ are sampled from the LeCun Gaussian \emph{conditioned} to lie entrywise in a positive box $[2M,3M]$ (equivalently, sampled from a truncated Gaussian supported on $[2M,3M]$), mimicking the event discussed around Theorem~\ref{newdeepthm}.



\paragraph{Baseline result (width $d_1=d_2=5$).}
Figure~\ref{fig:trainingloss} plots the mean training loss (over five random seeds) versus iteration on a log scale.
The theory-guided ``positive'' initialization exhibits a clear linear rate over several orders of magnitude.
By contrast, at this fixed width the standard Gaussian initializations do not consistently reach zero training error, consistent with the discussion around Theorem~\ref{newdeepthm}: the random-initialization guarantee there is conditional on an event that may have very small probability at finite width.
(The conditional-event curve (iv) shows that, \emph{given} the required positivity event, behavior is much closer to the deterministic positive initialization.)

\begin{figure}[t]
\centering
\includegraphics[width=0.78\textwidth]{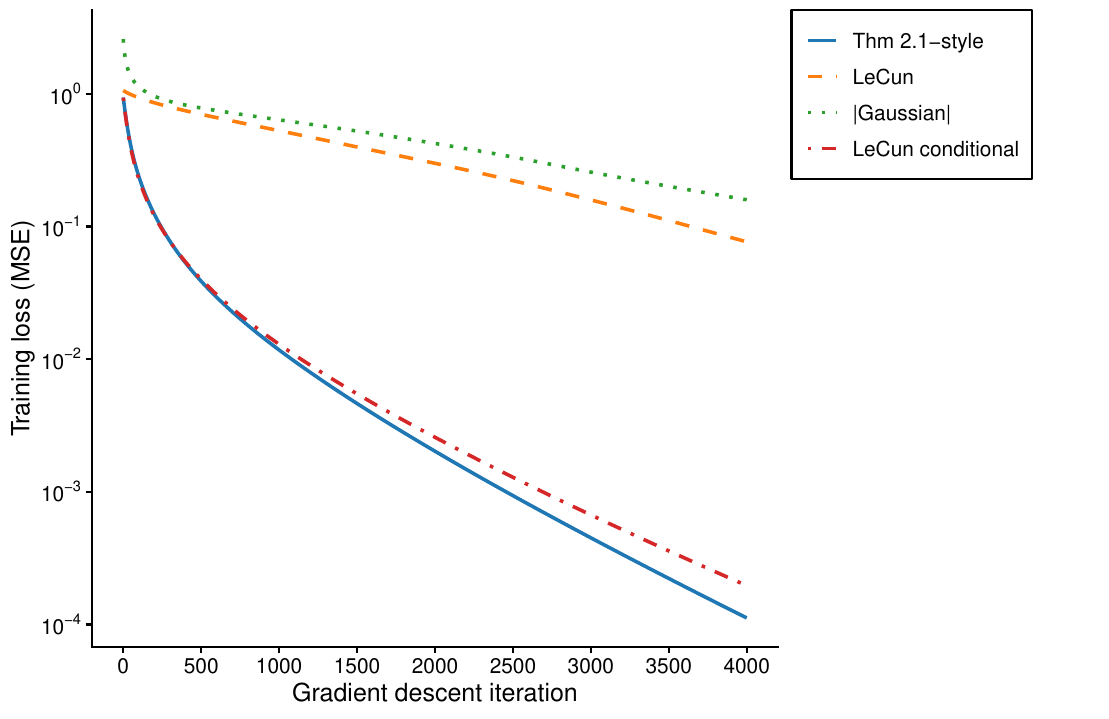}
\caption{Mean training loss (MSE) versus gradient descent iteration in the baseline experiment. The y-axis is logarithmic. The legend is placed outside the plotting region and the curves use distinct line types so that color is not essential.}
\label{fig:trainingloss}
\end{figure}

\paragraph{Wider network.}
To see how behavior changes with width, we repeated the same experiment with widths $d_1=d_2=10$.
Figure~\ref{fig:trainingloss_w10} shows that the qualitative picture persists: the proof-motivated positive initialization remains reliably fast, while random initializations improve with width but can still be noticeably slower at these sizes.

\begin{figure}[t]
\centering
\IfFileExists{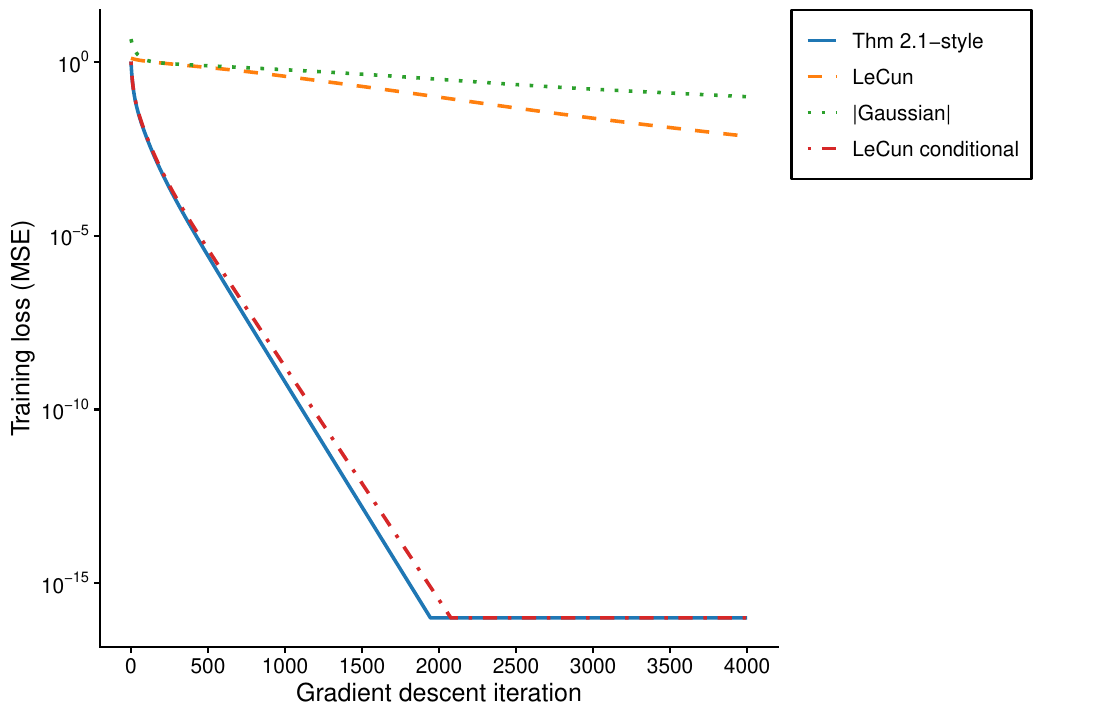}{
\includegraphics[width=0.78\textwidth]{fig_training_loss_width10.pdf}
}{
{\small \texttt{fig\_training\_loss\_width10.pdf} not found. Run the accompanying R script to generate it.}
}
\caption{Mean training loss (MSE) versus gradient descent iteration for the same synthetic setup, but with widths $d_1=d_2=10$. The y-axis is logarithmic.}
\label{fig:trainingloss_w10}
\end{figure}

\paragraph{Output-layer scaling.}
Theorem~\ref{deepthm} requires the output-layer weights to be ``sufficiently large.'' To visualize this dependence, we fixed the positive initialization pattern and swept the output-layer scale $A$ over a range of values.
Figure~\ref{fig:trainingloss_Asweep} shows a clear effect of the scale on optimization speed: once $A$ is large enough, the loss decays rapidly and approximately linearly on a log scale, whereas smaller $A$ can substantially slow down progress.
This provides a simple example of how a convergence proof can translate into a practically useful initialization guideline.

\begin{figure}[t]
\centering
\IfFileExists{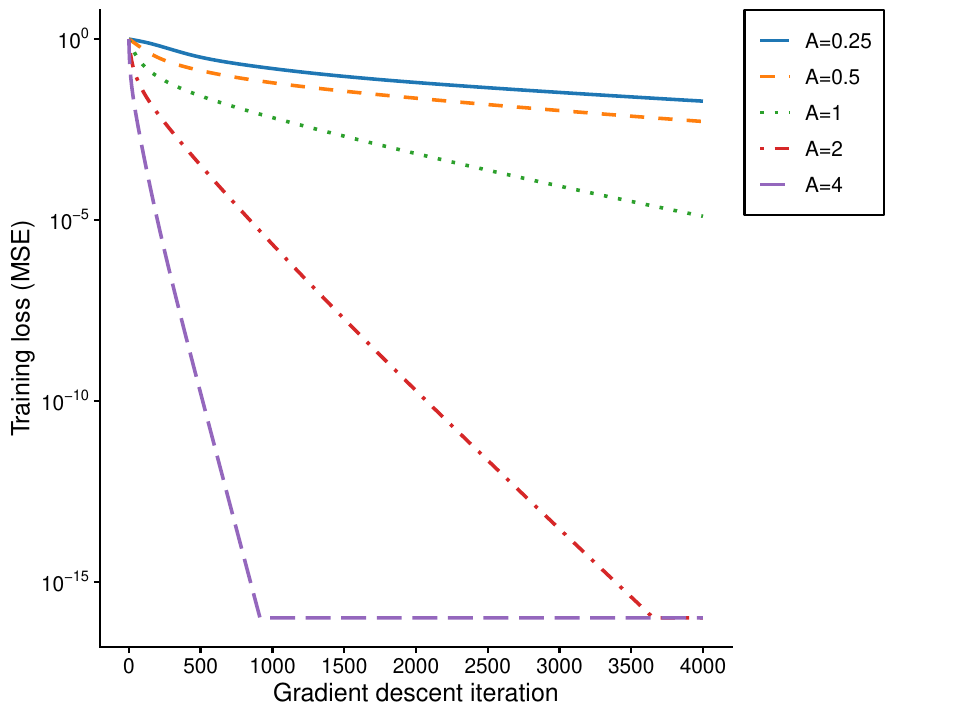}{
\includegraphics[width=0.78\textwidth]{fig_training_loss_A_sweep.pdf}
}{
{\small \texttt{fig\_training\_loss\_A\_sweep.pdf} not found. Run the accompanying R script to generate it.}
}
\caption{Mean training loss (MSE) versus iteration for the positive initialization with different output-layer scales $A$. The y-axis is logarithmic.}
\label{fig:trainingloss_Asweep}
\end{figure}

\section*{Acknowledgements}
I thank Persi Diaconis, Dmitriy Drusvyatskiy, John Duchi, and Lexing Ying for useful feedback, references, and suggestions. The research was partially supported by NSF grant DMS-2413864.

\bibliographystyle{abbrvnat}

\bibliography{myrefs} 

\appendix

\setcounter{section}{0}

\refstepcounter{section}   
\section*{Appendix}        
\addcontentsline{toc}{section}{Appendix}

\subsection{Proof of Theorem \ref{mainthm2}}\label{thm2proof}
To avoid trivialities, let us assume that $f$ is not everywhere zero in $B(x_0,r)$, and hence $\alpha<\infty$. 
Throughout this proof, we will denote the closed ball $B(x,r_0)$ by $B$ and its interior by $U$.  We start with some supporting lemmas. Many of these are standard results, but we prove them anyway for the sake of completeness and to save the trouble of sending the reader to look at references. 
For each $x\in \rr^p$, let $T_x$ denote the supremum of all finite $T$ such that for all $S\le T$, there is a unique continuous function $\phi_x:[0,S]\to \rr^p$ satisfying the integral equation
\[
\phi_x(t) = x - \int_0^t\nabla f(\phi_x(s))ds.
\]
Since there is always such a unique function for $T=0$ (namely, $\phi_x(0)=x$), $T_x$ is well-defined. From the above  definition, we see that there is a unique continuous function $\phi_x:[0,T_x)\to \rr^p$ satisfying the above integral equation. We will refer to this $\phi_x$ as the gradient flow starting from $x$.
\begin{lmm}\label{triviallmm}
Take any $x\in \rr^p$ and any $t\in [0,T_x)$ (assuming that $T_x>0$). Let $y = \phi_x(t)$. Then $T_y = T_x - t$ (with the convention $\infty-t = \infty$), and $\phi_y(s) = \phi_x(t+s)$ for all $s\in [0,T_y)$.
\end{lmm}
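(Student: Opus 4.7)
The plan is to prove the lemma by the standard shift argument for ODE flows, splitting it into two inequalities for $T_y$ together with a uniqueness-based identification of the flow.

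First I would establish the inequality $T_y \ge T_x - t$. Fix any $S < T_x - t$ and define $\psi: [0,S] \to \rr^p$ by $\psi(s) := \phi_x(t+s)$. Since $\phi_x$ satisfies its integral equation on $[0, t+S] \subset [0, T_x)$, we have
\[
\psi(s) = \phi_x(t+s) = x - \int_0^{t+s}\nabla f(\phi_x(u))\, du = \phi_x(t) - \int_t^{t+s} \nabla f(\phi_x(u))\, du.
\]
A change of variable $u = t+v$ in the last integral, together with $y = \phi_x(t)$, yields
\[
\psi(s) = y - \int_0^s \nabla f(\psi(v))\, dv,
\]
so $\psi$ is a continuous solution of the integral equation defining gradient flow from $y$ on $[0,S]$. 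For uniqueness on $[0,S]$, note that if $\tilde\psi$ were another continuous solution, one could concatenate it with $\phi_x|_{[0,t]}$ to produce a continuous solution of the integral equation from $x$ on $[0, t+S]$ different from $\phi_x$, contradicting uniqueness built into the definition of $T_x$. By the definition of $T_y$, this gives $T_y \ge S$; taking $S \uparrow T_x - t$ gives $T_y \ge T_x - t$.

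Next I would show $T_y \le T_x - t$, again by contradiction using concatenation. Suppose $T_y > T_x - t$ (this includes the case $T_x < \infty$; the case $T_x = \infty$ makes the claim trivial by the preceding paragraph). Pick any $S$ with $T_x - t < S < T_y$ and define $\Phi:[0, t+S] \to \rr^p$ by $\Phi(u) := \phi_x(u)$ for $u \in [0,t]$ and $\Phi(u) := \phi_y(u - t)$ for $u \in [t, t+S]$. Continuity at $u = t$ holds because $\phi_y(0) = y = \phi_x(t)$. Plugging $\Phi$ into the integral equation for gradient flow from $x$ and splitting the integral at $t$, the first piece reproduces $\phi_x$ on $[0,t]$ by its defining equation, and the second piece, after substituting $u = t + v$, reproduces $\phi_y$ on $[0, S]$ by the integral equation defining $\phi_y$ from $y$. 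Hence $\Phi$ is a continuous solution on $[0, t+S]$, which forces $T_x \ge t + S > T_x$, a contradiction.

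Combining the two inequalities gives $T_y = T_x - t$. Finally, the equality $\phi_y(s) = \phi_x(t+s)$ on $[0, T_y)$ follows from the first paragraph: the function $s \mapsto \phi_x(t+s)$ satisfies the same integral equation as $\phi_y$ on every $[0,S]$ with $S < T_y$, so by the uniqueness built into the definition of $T_y$ the two must agree. I do not expect any real obstacle here; the entire argument is a bookkeeping exercise using the change of variables $u \mapsto t + u$ in the integral equation and the maximality inherent in the definitions of $T_x$ and $T_y$.
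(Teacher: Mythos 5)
Your proposal takes essentially the same route as the paper: both directions are proved by the shift map $s\mapsto\phi_x(t+s)$ together with concatenation arguments and uniqueness bootstrapping, and the identity $\phi_y(s)=\phi_x(t+s)$ is extracted from the lower-bound direction. The lower-bound step is complete (the concatenation trick correctly transfers uniqueness on $[0,S]$ from uniqueness on $[0,t+S]$).

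There is, however, a gap in the upper-bound step $T_y\le T_x-t$. Recall that $T_x$ is defined as the supremum of finite $T$ such that for every $S\le T$ the integral equation has a \emph{unique} continuous solution on $[0,S]$. You construct the concatenated function $\Phi$ and verify it solves the integral equation on $[0,t+S]$, then conclude ``which forces $T_x\ge t+S$.'' But existence of a solution on $[0,t+S]$ alone does not push $T_x$ past $t+S$; you also need uniqueness on $[0,L]$ for every $L\le t+S$. The paper spends a separate argument on exactly this: given any other solution $h$ on $[0,L]$, it agrees with $\phi_x$ on $[0,t]$ by uniqueness below $T_x$, and then $h(t+\cdot)$ solves the integral equation from $y$ on $[0,L-t]$ with $L-t\le S<T_y$, so it must equal $\phi_y$ by uniqueness below $T_y$; hence $h=\Phi$. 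This is the same bootstrapping you already used in the $\ge$ direction, so the repair is small, but as written the conclusion ``$T_x\ge t+S$'' does not follow from what you proved.
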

\begin{proof}
Take any $S\in [0, T_x-t)$. Define $g(s) := \phi_x(t+s)$ for $s\in [0,S]$. Then 
\begin{align*}
g(s) &= \phi_x(t+s) = x - \int_0^{t+s} \nabla f(\phi_x(u)) du\\
&= \phi_x(t)  - \int_t^{t+s} \nabla f(\phi_x(u)) du= y - \int_0^s \nabla f(g(v)) dv.
\end{align*}
Thus, $g$ satisfies the gradient flow integral equation starting from $y$, in the interval $[0,S]$. Let $h$ be any other flow with these properties. Define $w: [0,t+S] \to \rr^p$ as
\[
w(u) := 
\begin{cases}
\phi_x(u) &\text{ if } u\le t,\\
h(u-t) &\text{ if } u > t. 
\end{cases}
\]
Then for $u\in [t, t+S]$,
\begin{align*}
w(u) = h(u-t) &= y - \int_0^{u-t} \nabla f(h(v)) dv\\
&= \phi_x(t) - \int_0^{u-t} \nabla f(w(t+v)) dv\\
&= x - \int_0^t \nabla f(w(s)) ds - \int_t^{u} \nabla f(w(s)) ds\\
&= x - \int_0^u \nabla f(w(s)) ds.
\end{align*}
Obviously, $w$ satisfies this integral equation also in $[0,t]$. Thus, the uniqueness of the flow in $[0,t+S]$ implies that $w = \phi_x$ in $[0, t+S]$. This implies, in particular, that $h=g$. Therefore, we conclude that $T_y \ge T_x-t$. This also shows that $\phi_y(s) = \phi_x(t+s)$ for all $s\in [0, T_x-t)$. So it only remains to show that $T_y \le T_x -t$. Suppose not. Then $T_x$ must be finite, and there is some $S> T_x-t$ such that the gradient flow integral equation starting from $y$ has a unique solution in $[0, L]$ for every $L\le S$. Take any $L\le S$. Define $g: [0, t+L]\to \rr^p$ as 
\[
g(s) :=
\begin{cases}
\phi_x(s) &\text{ if } s \le t,\\
\phi_y(s-t) &\text{ if } s> t.
\end{cases}
\]
Then for any $s\in [t, t+L]$,
\begin{align*}
g(s) &= y - \int_0^{s-t} \nabla f(\phi_y(u)) du\\
&= x - \int_0^t \nabla f (g(u)) du -  \int_0^{s-t} \nabla f(g(u+t)) du\\
&= x - \int_0^s \nabla f(g(u)) du.
\end{align*}
Thus, $g$ satisfies the integral equation for the gradient flow starting from $x$ in the interval $[t,t+L]$. Obviously, it satisfies the equation also in $[0,t]$. Suppose that $h:[0, t+L]\to \rr^p$ is any other continuous function satisfying this property. Then by the uniqueness of the gradient flow starting from $x$ in the interval $[0,t]$, we get that $h(s) = \phi_x(s)=g(s)$ for all $s\le t$. Next, define $h_1 :[0, L] \to\rr^p$ as $h_1(u) := h(t+u)$. Then, since $h=\phi_x$ on $[0,t]$, 
\begin{align*}
h_1(u) &= h(t+u) \\
&= x - \int_0^{t+u}\nabla f(h(v)) dv\\
&= y - \int_t^{t+u} \nabla f(h(v)) dv = y - \int_0^{u} \nabla f(h_1(s)) ds.
\end{align*}
Thus, $h_1$ satisfies the integral equation for the gradient flow starting from $y$ in the interval $[0, L]$. Thus, by the uniqueness assumption for this flow in $[0,L]$, we get that $h_1 = \phi_y$ in this interval. Consequently, $h = g$ in $[0, t+L]$. Thus, the gradient flow starting from $x$ has a unique solution in $[0,T]$ for every $T\le t+S$. Since $t+S> T_x$, this contradicts the definition of $T_x$. 
\end{proof}
\begin{lmm}\label{standardlmm}
For any compact set $K\subseteq \rr^p$, $\inf_{x\in K} T_x>0$.
\end{lmm}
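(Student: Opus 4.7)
The plan is to invoke the standard Picard--Lindelöf argument, but phrased so that the time of existence depends only on local bounds on $\nabla f$ and on its Lipschitz constant, both of which are controlled uniformly over $K$.

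First I would enlarge $K$ to $K' := \{y \in \rr^p : \dist(y,K)\le 1\}$, which is compact because $K$ is compact. Since $f$ is $C^2$, both $\nabla f$ and its derivative are continuous, so I can define
\[
N := \sup_{y\in K'}|\nabla f(y)| < \infty, \qquad M := \sup_{y\in K'}\|\nabla^2 f(y)\| < \infty.
\]
By the mean value inequality, $\nabla f$ is Lipschitz on $K'$ with constant $M$. Now fix any
\[
T_0 := \min\biggl\{\frac{1}{N+1},\, \frac{1}{2(M+1)}\biggr\} > 0,
\]
which does not depend on $x\in K$.

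Next, for each $x \in K$, I would run a Banach fixed point argument on the complete metric space
\[
\mx_x := \{\psi \in C([0,T_0],\rr^p) : \psi(0)=x,\ \psi(t)\in B(x,1)\ \forall t\in [0,T_0]\},
\]
equipped with the sup norm, using the operator $(\Phi\psi)(t) := x - \int_0^t \nabla f(\psi(s))\,ds$. The choice $T_0\le 1/(N+1)$ ensures $\Phi$ maps $\mx_x$ into itself (since $\psi(s)\in B(x,1)\subseteq K'$ makes $|\nabla f(\psi(s))|\le N$, hence $|\Phi\psi(t)-x|\le NT_0 \le 1$), and $T_0\le 1/(2(M+1))$ ensures $\Phi$ is a contraction with ratio $\le 1/2$ (by the Lipschitz bound on $\nabla f$ on $K'$). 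The unique fixed point gives a continuous solution of the integral equation on $[0,T_0]$. Uniqueness of such solutions among all continuous $\rr^p$-valued maps follows from a standard Grönwall argument (or by noting that any solution automatically stays inside $B(x,1)$ for small times and so must coincide with the fixed point). Hence $T_x \ge T_0$ for every $x\in K$, which gives $\inf_{x\in K} T_x \ge T_0 > 0$.

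There is no real obstacle here; the only mild point is making sure the local existence time $T_0$ is chosen independently of the base point $x$, which is arranged by first thickening $K$ to the compact set $K'$ and then extracting uniform bounds $N$ and $M$ on $\nabla f$ and its Lipschitz constant over $K'$.
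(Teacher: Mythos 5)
Your proposal is correct and follows essentially the same route as the paper: thicken $K$ to a compact neighborhood $K'$, extract uniform bounds on $|\nabla f|$ and its Lipschitz constant there, and run a Banach fixed-point argument on paths confined to a small ball around $x$, with the existence time depending only on those uniform bounds. The one place the paper is more explicit is the final uniqueness step (ruling out continuous solutions that escape the neighborhood): it spells out a first-exit-time argument showing any solution would have to be strictly inside $K'$ at its supposed exit time, which is exactly the reasoning your parenthetical ``any solution automatically stays inside $B(x,1)$ for small times'' gestures at and should be expanded to cover all of $[0,T_0]$, since the naked Gr\"onwall alternative you also mention presupposes a Lipschitz bound along the competing solution's trajectory that is not yet known at that point.
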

\begin{proof}
Let $K'$ be the set of all points that are within distance $1$ from $K$. Note that $K'\supseteq K$ and $K'$ is compact.  Since $f$ is in $C^2$, its first and second order derivatives are uniformly bounded on $K'$. Thus, there is some $L$ such that for any $x,y\in K'$, 
\[
|\nabla f(x)|\le \frac{L}{2}, \ \text{ and } \  |\nabla f(x)-\nabla f(y)|\le \frac{L}{2}|x-y|,
\]
where $|\cdot|$ denotes Euclidean norm. Take any $x\in K$ and any $T\le 1/L$. Let $\mb$ denote the Banach space of all continuous maps from $[0,T]$ into $\rr^p$, equipped with the norm
\[
\|g\| := \sup_{t\in [0,T]} |g(t)|.
\]
Let $\ma$ be the subset of $\mb$ consisting of all $g$ such that $g(0)=x$ and $g(t)\in K'$ for all $t\in [0,T]$. It is easy to see that $\ma$ is a closed subset of $\mb$. Define a map $\Phi:\ma \to \mb$ as
\[
\Phi(g)(t) := x - \int_0^t \nabla f(g(s)) ds.
\]
Then note that for any $t\in [0,T]$,
\[
|\Phi(g)(t)-x| \le \int_0^t|\nabla f(g(s))| ds\le \int_0^t \frac{L}{2} ds \le \frac{1}{2},
\]
where the second inequality holds because $g(s)\in K'$ for all $s$, and the third inequality holds because $t\le 1/L$. Since all points at distance $\le 1$ from $x$ are in $K'$, this shows that $\Phi(\ma)\subseteq \ma$. Next, note that for any $g,h\in \ma$, and any $t\in [0,T]$,
\begin{align*}
|\Phi(g)(t)-\Phi(h)(t)| &\le \int_0^t|\nabla f(g(s)) - \nabla f(h(s))| ds\\
&\le \frac{L}{2}\int_0^t |g(s)-h(s)|ds \\
&\le \frac{1}{2}\|g-h\|,
\end{align*}
where the second inequality holds because $g(s),h(s)\in K'$, and the third inequality holds because $t\le 1/L$. This proves that $\Phi$ is a contraction mapping on $\ma$, and therefore, by the Banach fixed point theorem, it has a unique fixed point $g^*\in \ma$. Then $g^*$ is a continuous map from $[0,T]$ into $\rr^p$, that satisfies the integral equation
\[
g^*(t) = x - \int_0^t \nabla f(g^*(s)) ds.
\]
We claim that $g^*$ is the only such map. To prove this, suppose that there exists another map $h$ with the above properties. If $h$ maps into $K'$, then the uniqueness of the fixed point implies that $h=g^*$. So, suppose that $h$ ventures outside $K'$. Let 
\[
t_0 := \inf\{t: h(t) \notin K'\}.
\]
Since $h(0)=x\in K'$ and $h$ goes outside $K'$, $t_0$ is well-defined and finite. Moreover, since $K'$ is closed, $h(t_0)\in \partial K'$. But note that since $h(s)\in K'$ for all $s\le t_0$,  
\begin{align*}
|h(t_0) - x|&\le \int_0^{t_0} |\nabla f(h(s))| ds \le \frac{L t_0}{2}\le \frac{1}{2}.
\end{align*}
But this implies that the ball of radius $1/3$ centered at $h(t_0)$ is completely contained in $K'$, and hence, $h(t_0)\notin \partial K'$. Thus, $h$ cannot venture outside $K'$. We conclude that $T_x \ge 1/L$.
\end{proof}
The above lemma has several useful corollaries.
\begin{cor}\label{badcor}
If $T_x<\infty$ for some $x\in \rr^p$, then $|\phi_x(t)|\to\infty$ as $t\to T_x$.
\end{cor}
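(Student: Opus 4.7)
The plan is to argue by contradiction using the two preceding lemmas. Suppose $T_x < \infty$ but $|\phi_x(t)|$ does not tend to infinity as $t \to T_x$. Then there exist $M < \infty$ and a sequence $t_n \uparrow T_x$ with $|\phi_x(t_n)| \le M$ for all $n$. The closed ball $K := \overline{B(0,M)} \subset \rr^p$ is compact, and $\phi_x(t_n) \in K$ for every $n$.

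By Lemma \ref{standardlmm}, there exists $\delta > 0$ such that $T_y \ge \delta$ for every $y \in K$. Choose $n$ large enough that $T_x - t_n < \delta$, and set $y := \phi_x(t_n)$. Since $y \in K$, we have $T_y \ge \delta$. On the other hand, Lemma \ref{triviallmm} gives $T_y = T_x - t_n < \delta$. This is a contradiction, so the assumption that $|\phi_x(t)|$ stays bounded along a subsequence fails, and $|\phi_x(t)| \to \infty$ as $t \to T_x$.

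No step of the argument looks like a serious obstacle: the only subtlety is making sure that boundedness along a subsequence (rather than in a full neighborhood of $T_x$) is enough to derive the contradiction, but this is handled automatically because Lemma \ref{triviallmm} applied at any single time $t_n$ already yields the required contradiction once $T_x - t_n < \delta$.
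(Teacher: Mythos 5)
Your proof is correct and takes essentially the same route as the paper: both arguments extract a compact set $K$ in which $\phi_x$ returns arbitrarily close to time $T_x$, apply Lemma \ref{standardlmm} to get a uniform lower bound $\delta$ on $T_y$ over $K$, and then invoke Lemma \ref{triviallmm} at a time $t$ with $T_x - t < \delta$ to reach the contradiction $T_y = T_x - t < \delta$. The only cosmetic difference is that you phrase ``does not tend to infinity'' via a bounded subsequence $t_n \uparrow T_x$ and the ball $\overline{B(0,M)}$, whereas the paper states it directly in terms of a compact set visited arbitrarily late; these are equivalent.
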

\begin{proof}
Suppose that $|\phi_x(t)|\not\to \infty$. Then there is a compact set $K$ such that for any $\ep>0$, we can find $t\in [T_x-\ep,T_x)$ with $\phi_x(t)\in K$. By Lemma \ref{standardlmm}, we can find $\ep\in (0, \inf_{y\in K} T_y)$. Take any $t\in [T_x-\ep,T_x)$ such that $y:= \phi_x(t)\in K$. Then, by Lemma~\ref{triviallmm}, we have $T_y = T_x - t < \ep$. But  this contradicts the fact that $T_y\ge \inf_{z\in K}T_z >\ep$.
\end{proof}
\begin{cor}\label{trivialcor}
If $\nabla f(x)=0$ or $f(x)=0$ for some $x\in \rr^p$, then $T_x=\infty$ and $\phi_x(t)=x$ for all $t\in [0,\infty)$.
\end{cor}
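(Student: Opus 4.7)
The plan is to reduce both hypotheses to a single critical-point condition, exhibit the constant trajectory as a solution, and then invoke Corollary~\ref{badcor} to rule out a finite termination time.

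First, I would reduce the case $f(x) = 0$ to the case $\nabla f(x) = 0$. Since the hypothesis of Theorem~\ref{mainthm2} is that $f : \rr^p \to [0, \infty)$ is nonnegative, a point $x$ with $f(x) = 0$ is a global minimum of $f$, and because $f$ is $C^2$, the first-order optimality condition forces $\nabla f(x) = 0$ there. So from here on I may assume only that $\nabla f(x) = 0$.

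Second, I would verify that the constant map $\psi : [0, \infty) \to \rr^p$ defined by $\psi(t) := x$ solves the gradient-flow integral equation. Since $\nabla f(\psi(s)) = \nabla f(x) = 0$ for every $s$, we have $\psi(t) = x - \int_0^t \nabla f(\psi(s))\, ds$ for all $t \ge 0$, so $\psi$ is a solution on every interval $[0, S]$. By Lemma~\ref{standardlmm} applied to the compact set $K = \{x\}$ we have $T_x > 0$, and by definition $\phi_x$ is the unique continuous solution of the integral equation on $[0, T_x)$. Restricting $\psi$ to any subinterval $[0, S] \subset [0, T_x)$ produces a solution, so uniqueness forces $\phi_x(t) = x$ on the whole of $[0, T_x)$.

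Third, I would rule out $T_x < \infty$ using Corollary~\ref{badcor}. If $T_x$ were finite, that corollary would demand $|\phi_x(t)| \to \infty$ as $t \to T_x$; but by the previous step $|\phi_x(t)| = |x|$ for every $t \in [0, T_x)$, a contradiction. Hence $T_x = \infty$, and combined with the preceding paragraph this gives $\phi_x(t) = x$ for all $t \in [0, \infty)$. There is no serious obstacle; the only mildly delicate point is the initial reduction, which uses smoothness of $f$ together with the assumption that $f \ge 0$ to pass from a zero of $f$ to a critical point.
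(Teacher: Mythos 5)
Your proof is correct. You make the same two preliminary moves as the paper: reduce $f(x)=0$ to $\nabla f(x)=0$ via nonnegativity and smoothness, and apply Lemma~\ref{standardlmm} with $K=\{x\}$ to get $T_x>0$ and then observe that the constant curve is the unique solution on $[0,T_x)$. Where you diverge is in ruling out $T_x<\infty$: you invoke Corollary~\ref{badcor}, noting that the constant trajectory $\phi_x(t)\equiv x$ is bounded and therefore cannot blow up at a finite $T_x$. The paper instead argues directly from Lemma~\ref{triviallmm}: if $T_x$ were finite and positive, set $t:=T_x/2$, use $\phi_x(t)=x$ to deduce $T_x=T_{\phi_x(t)}=T_x-t=T_x/2$, a contradiction. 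Both are valid and both ultimately rest on Lemma~\ref{triviallmm} (which underlies Corollary~\ref{badcor} as well); the paper's version is a little more self-contained in that it avoids the detour through the blowup corollary, while yours is arguably cleaner to state because the boundedness of a constant path makes the contradiction immediate. Since Corollary~\ref{badcor} is proved before Corollary~\ref{trivialcor} in the paper, there is no circularity in your argument.
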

\begin{proof}
If $f(x)=0$, then since $f$ is a nonnegative $C^2$ function, $\nabla f(x)$ must also be zero. Thus, it suffices to prove the result under the assumption that $\nabla f(x)=0$. 
Taking $K=\{x\}$ in Lemma \ref{standardlmm}, we get that $T_x>0$. Suppose that $T_x$ is finite. Then let $t:=T_x/2$. Note that since $\nabla f(x)=0$, $g(s)\equiv x$ is a solution of the gradient flow equation starting from $x$ in the interval $[0,t]$. By uniqueness, this shows that $\phi_x(s)=x$ for all $s\in [0,t]$. In particular, $\phi_x(t)=x$. By Lemma \ref{triviallmm}, this implies that $T_x = T_x-t = T_x/2$, which contradicts the fact that $T_x$ is nonzero and finite. Thus, $T_x=\infty$. Again, the function $g(t)\equiv x$ is a solution of the flow equation in $[0,\infty)$. Thus, by uniqueness, $\phi_x(t)=x$ for  all $x$.
\end{proof}
\begin{cor}\label{impcor}
Take any $x\in \rr^p$. If for some $t\in [0,T_x)$, we have $\nabla f(\phi_x(t)) = 0$ or $f(\phi_x(t)) = 0$, then $T_x=\infty$ and $\phi_x(s)=\phi_x(t)$ for all $s>  t$. 
\end{cor}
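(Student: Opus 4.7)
The plan is to deduce Corollary \ref{impcor} directly from the two preceding results, Lemma \ref{triviallmm} and Corollary \ref{trivialcor}, by using the point $y := \phi_x(t)$ as a new starting point for the flow. The basic idea is that once the flow hits a zero of $\nabla f$ (or of $f$ itself), restarting the flow at that point gives a constant trajectory, and then the ``concatenation'' lemma (Lemma \ref{triviallmm}) lets us translate this back into a statement about the original flow $\phi_x$.

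First I would set $y := \phi_x(t)$. By hypothesis, either $\nabla f(y) = 0$ or $f(y) = 0$, so Corollary \ref{trivialcor} applies to $y$ and yields $T_y = \infty$ together with $\phi_y(s) = y$ for every $s \in [0,\infty)$. Next I would invoke Lemma \ref{triviallmm}, which states that $T_y = T_x - t$ and $\phi_y(s) = \phi_x(t+s)$ for all $s \in [0,T_y)$. From $T_y = \infty$ I immediately obtain $T_x - t = \infty$, so $T_x = \infty$. Combining $\phi_y(s) = y = \phi_x(t)$ with $\phi_y(s) = \phi_x(t+s)$ then gives $\phi_x(t+s) = \phi_x(t)$ for every $s \geq 0$, which rewritten as a statement about $u := t + s$ says $\phi_x(u) = \phi_x(t)$ for all $u \geq t$.

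There is essentially no obstacle here: the proof is a two-line composition of results that have already been established, and the only thing to be careful about is the convention that $\infty - t = \infty$ in the statement of Lemma \ref{triviallmm}, which is exactly what permits the step $T_y = \infty \Rightarrow T_x = \infty$. No new estimates or constructions are required.
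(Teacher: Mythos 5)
Your proof is correct and follows essentially the same two-step argument as the paper: set $y := \phi_x(t)$, apply Corollary \ref{trivialcor} to get $T_y = \infty$ and constancy of $\phi_y$, then use Lemma \ref{triviallmm} to translate these facts back to $T_x$ and $\phi_x$. No differences in substance.
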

\begin{proof}
Take any $t$ such that $\nabla f(\phi_x(t)) = 0$ or $f(\phi_x(t))=0$. Let $y := \phi_x(t)$. Then by Lemma \ref{triviallmm}, $T_y = T_x - t$. But by Corollary \ref{trivialcor}, $T_y = \infty$. Thus, $T_x=\infty$. Also by Corollary \ref{trivialcor}, $\phi_y(u)=y = \phi_x(t)$ for all $u> 0$, and by Lemma \ref{triviallmm}, $\phi_y(u) = \phi_x(t+u)$ for all $u>0$. Thus, $\phi_x(s)=\phi_x(t)$ for all $s>t$.
\end{proof}
In the following, let us fix $x_0$ and $r$ as in Theorem \ref{mainthm2} and write $\phi$ and $T$ instead of $\phi_{x_0}$ and $T_{x_0}$, for simplicity of notation.
\begin{lmm}\label{phicannot}
If $T<\infty$, then $\phi$ must visit the boundary of $B$. 
\end{lmm}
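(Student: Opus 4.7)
The plan is to argue by contrapositive: I will assume that $\phi$ never touches $\partial B$, and show that this forces $T=\infty$. Since $x_0$ lies in the interior $U$ of $B$, and $\phi$ is continuous on $[0,T)$ with $\phi(0)=x_0$, the image $\phi([0,T))$ is a connected subset of $\rr^p$. Under the assumption that this image avoids $\partial B$, it lies in the open set $U\cup (\rr^p\setminus B)$. Since $U$ and $\rr^p\setminus B$ are disjoint open sets and $\phi(0)=x_0\in U$, connectedness forces the entire image to lie in $U$. In particular, $\phi(t)\in B$ for every $t\in [0,T)$, so $|\phi(t)|$ is uniformly bounded by $|x_0|+r$.

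Now I invoke Corollary \ref{badcor}: if $T<\infty$, then $|\phi(t)|\to\infty$ as $t\to T$. This is in direct conflict with the uniform boundedness just established. Hence $T$ cannot be finite, i.e., $T=\infty$, which is the contrapositive of what was to be proved.

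I do not foresee any real obstacle; the entire argument is a two-line application of Corollary \ref{badcor} once one observes that a continuous path starting in the open ball and never hitting the boundary must remain inside the closed ball. The only subtlety worth flagging in the write-up is the connectedness step that justifies why avoiding $\partial B$ forces the flow to stay in $U$ rather than jumping to the exterior; this simply uses that $\phi$ is continuous on the connected interval $[0,T)$ and $U$ and $\rr^p\setminus B$ are separated.
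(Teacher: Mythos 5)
Your proof is correct, but it takes a different route than the paper. The paper's proof of Lemma \ref{phicannot} goes back to Lemma \ref{standardlmm} directly: it extracts a uniform $\delta>0$ such that $T_y\ge\delta$ for all $y\in B$, picks a time $t\in(T-\delta,T)$, and observes that the flow restarted at $\phi(t)\in U\subseteq B$ extends the original flow past $T$, contradicting the definition of $T$. You instead invoke Corollary \ref{badcor} — the consequence of Lemma \ref{standardlmm} that says a flow with finite lifetime must escape to infinity — and combine it with boundedness of a flow confined to $B$. Both arguments ultimately trace back to Lemma \ref{standardlmm}, and both are valid; yours is arguably the more economical write-up since Corollary \ref{badcor} is already available, and it packages the flow-extension step into the already-proved escape-to-infinity dichotomy. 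Your explicit connectedness argument (the flow starting in $U$ and avoiding $\partial B$ must stay in $U$ because $U$ and $\rr^p\setminus B$ are disjoint open sets) is a point the paper glosses over by simply assuming the flow "remains in $U$ throughout"; making it explicit is a small but genuine tightening.
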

\begin{proof}
Suppose that $T<\infty$ and $\phi$ remains in $U$ throughout. By Lemma \ref{standardlmm}, there is some $\delta>0$ such that $T_y\ge \delta$ for all $y\in B$. Choose $t\in (T-\delta, T)$. Let $z:= \phi(t)$. Then  $z\in U$, and therefore $T_z\ge \delta$. This shows that the gradient flow starting from $x_0$ exists up to time at least $t+\delta$. Since $t+\delta > T$, this gives a contradiction which proves that $\phi$ cannot remain in $U$ throughout.
\end{proof}
\begin{lmm}\label{expboundlmm}
Let $t\ge 0$ be any number such that $\phi(s)\in B$ for all $s\le t$. Then for all $s\le t$,
\[
f(\phi(s)) \le e^{-\alpha s} f(x_0).
\]
\end{lmm}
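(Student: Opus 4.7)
The plan is to differentiate $f \circ \phi$ along the gradient flow and apply the definition of $\alpha$ to obtain a scalar differential inequality, from which the claimed exponential decay follows by an elementary Gronwall-type argument.

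First, for any $s < T$, the chain rule together with the gradient flow equation gives
\[
\frac{d}{ds} f(\phi(s)) = \nabla f(\phi(s)) \cdot \phi'(s) = -|\nabla f(\phi(s))|^2.
\]
By hypothesis, $\phi(s) \in B(x_0,r)$ for every $s \le t$, so by the definition of $\alpha = \alpha(x_0,r)$, we have the pointwise bound $|\nabla f(\phi(s))|^2 \ge \alpha\, f(\phi(s))$ whenever $f(\phi(s)) \ne 0$. Plugging this into the previous identity yields
\[
\frac{d}{ds} f(\phi(s)) \le -\alpha\, f(\phi(s))
\]
on the set where $f(\phi(s))>0$.

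Next I would integrate this differential inequality. Define $t_* := \inf\{s\ge 0 : f(\phi(s)) = 0\}$ (with the convention $t_*=\infty$ if $f(\phi(s))$ never vanishes). On $[0, t_*)$, the quantity $f(\phi(s))$ is strictly positive, so I can divide and write $\frac{d}{ds}\log f(\phi(s)) \le -\alpha$; integrating from $0$ to $s$ yields $f(\phi(s)) \le e^{-\alpha s} f(x_0)$ for every $s < t_*$. By continuity, the same bound extends to $s = t_*$ when $t_*$ is finite.

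Finally, for $s \in (t_*, t]$ (if this interval is nonempty), Corollary~\ref{impcor} applies to the time $t_*$: since $f(\phi(t_*))=0$, the flow satisfies $\phi(s) = \phi(t_*)$ for every $s > t_*$, so $f(\phi(s)) = 0 \le e^{-\alpha s} f(x_0)$ trivially. Combining the two ranges gives the inequality on all of $[0,t]$. The only mildly delicate point is ensuring that the division by $f(\phi(s))$ is justified, which is precisely why Corollary~\ref{impcor} is needed to handle the post-vanishing regime separately; otherwise the argument is a routine application of the chain rule and Gronwall.
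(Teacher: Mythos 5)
Your proof is correct and follows the same approach as the paper: the same chain-rule computation, the same application of the definition of $\alpha$, and then the Gronwall-type integration. The paper simply compresses the final integration step into the phrase ``standard exercise,'' whereas you spell out the details, including the use of Corollary~\ref{impcor} to handle the regime after $f(\phi(s))$ first vanishes.
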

\begin{proof}
Note that by the flow equation for $\phi$,
\begin{align}\label{ddsfphi}
\frac{d}{ds}f(\phi(s)) = \nabla f(\phi(s)) \cdot \phi'(s) = - |\nabla f(\phi(s))|^2. 
\end{align}
By the definition of $\alpha$, the right side is bounded above by $-\alpha f(\phi(s))$ if $\phi(s)\in B$. It is now a standard exercise to deduce the claimed inequality.
\end{proof}
\begin{lmm}\label{phicannot2}
The flow $\phi$ cannot visit the boundary of $B$.
\end{lmm}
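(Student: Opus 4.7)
The plan is to argue by contradiction: suppose that $\phi(t^*)\in \partial B$ for some $t^*\in [0,T)$; choosing $t^*$ minimal among such times (possible since $\partial B$ is closed and $\phi$ is continuous), we may assume $\phi(s)\in B$ for all $s\in[0,t^*]$ and $|\phi(t^*)-x_0|=r$. The aim is to derive a contradiction directly from \eqref{mainassump} by bounding the arc-length of $\phi|_{[0,t^*]}$ and showing it is strictly less than $r$.

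The key computation is a {\L}ojasiewicz-style trick applied to $\sqrt{f(\phi(s))}$. Wherever $f(\phi(s))>0$, the chain rule and the flow equation $\phi'=-\nabla f\circ\phi$ give
\[
\frac{d}{ds}\sqrt{f(\phi(s))} = -\frac{|\nabla f(\phi(s))|^2}{2\sqrt{f(\phi(s))}},
\]
and the inequality $|\nabla f(\phi(s))|^2\ge \alpha f(\phi(s))$, valid on $B$ by the very definition of $\alpha$, upgrades this to
\[
\frac{d}{ds}\sqrt{f(\phi(s))} \le -\tfrac{1}{2}\sqrt{\alpha}\,|\nabla f(\phi(s))|.
\]
Integrating from $0$ to $t^*$ and using $\sqrt{f(\phi(t^*))}\ge 0$ yields the arc-length bound
\[
\int_0^{t^*} |\nabla f(\phi(s))|\, ds \le \frac{2\sqrt{f(x_0)}}{\sqrt{\alpha}}.
\]
Since $\phi'=-\nabla f\circ\phi$, the left side is an upper bound on $|\phi(t^*)-x_0|$. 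But \eqref{mainassump} is precisely the statement $2\sqrt{f(x_0)}/\sqrt{\alpha}<r$, which contradicts $|\phi(t^*)-x_0|=r$.

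The only technical wrinkle is that $\sqrt{f}$ fails to be differentiable at zeros of $f$. If $f(x_0)=0$, Corollary \ref{trivialcor} makes $\phi\equiv x_0$ and there is nothing to prove. Otherwise, if $f(\phi(s_0))=0$ at some smallest $s_0\in (0,t^*]$, Corollary \ref{impcor} forces $\phi$ to be constant on $[s_0,\infty)$, so $\phi(t^*)=\phi(s_0)$, and it suffices to run the estimate on subintervals $[0,s_0-\ep]$ (on which $f\circ\phi$ is bounded away from zero by continuity) and pass to the limit $\ep\downarrow 0$. I do not anticipate a serious obstacle; the calculation is the standard {\L}ojasiewicz trapping argument, and the constant $4$ in \eqref{mainassump} is calibrated precisely so that the arc-length bound $2\sqrt{f(x_0)}/\sqrt{\alpha}$ is strictly less than $r$.
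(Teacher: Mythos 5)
Your proof is correct, and it is the same {\L}ojasiewicz trapping argument the paper uses; the only difference is how the arc-length integral $\int_0^{t^*}|\nabla f(\phi(s))|\,ds$ is bounded. You establish the pointwise domination
\[
|\nabla f(\phi(s))| \;\le\; -\frac{2}{\sqrt{\alpha}}\,\frac{d}{ds}\sqrt{f(\phi(s))},
\]
directly from $|\nabla f|^2/f\ge\alpha$ on $B$, and integrate. The paper instead leaves the identity $\frac{d}{ds}\sqrt{f(\phi(s))}=-|\nabla f(\phi(s))|^2/(2\sqrt{f(\phi(s))})$ untouched, applies Cauchy--Schwarz to split $\int|\nabla f(\phi)|$ as $\bigl(\int 2g \cdot \int \tfrac{|\nabla f(\phi)|^2}{2g}\bigr)^{1/2}$ with $g=\sqrt{f\circ\phi}$, and then controls $\int 2g$ via the exponential decay from Lemma \ref{expboundlmm}. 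Both routes yield exactly the bound $2\sqrt{f(x_0)/\alpha}<r$, so neither loses anything; yours is a bit shorter and does not call on Lemma \ref{expboundlmm}, while the paper's Cauchy--Schwarz phrasing is reused verbatim to extract the exponential rate $|\phi(t)-\phi(s)|\le re^{-\alpha s/2}$ in the proof of Theorem \ref{mainthm2} (though your pointwise bound would also give that rate if combined with Lemma \ref{expboundlmm} at the last step).

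One small simplification you missed: the case $s_0<t^*$ in your technical wrinkle cannot occur. Once $f(\phi(s_0))=0$ with $s_0<t^*$, Corollary \ref{impcor} forces $\phi$ to be constant from $s_0$ onward, so $\phi(s_0)=\phi(t^*)\in\partial B$, contradicting minimality of $t^*$. This is exactly the paper's observation that $\phi(s)\neq\phi(t_0)$ for $s<t_0$ already implies $f(\phi(s))\neq 0$ on $[0,t_0)$; it removes the need for the subinterval-and-limit maneuver except possibly at the single right endpoint, which the monotone-convergence (or one-sided limit) step handles cleanly.
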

\begin{proof}
Suppose that $\phi$ does visit the boundary of $B$. Let $t_0:= \inf\{t:\phi(t)\in \partial B\}$. Then $\phi(t_0)\in \partial B$ and $\phi(s)\in U$ for all $s< t_0$. By the flow equation,
\begin{align}\label{phit0x0}
|\phi(t_0)-x_0| &\le \int_0^{t_0} |\nabla f(\phi(s))| ds.
\end{align}
Now, since $\phi(s)\ne \phi(t_0)$ for all $s<t_0$, Corollary \ref{impcor} shows that $f(\phi(s))\ne 0$ for all $s< t_0$. Thus, the map 
\[
g(s) := \sqrt{f(\phi(s))}
\]
is differentiable in $(0, t_0)$, with continuous derivative
\begin{align*}
g'(s)&= \frac{1}{2g(s)} \frac{d}{ds}f(\phi(s))= -\frac{|\nabla f(\phi(s))|^2}{2g(s)},
\end{align*}
where the second identity follows from \eqref{ddsfphi}. Thus, for any compact interval $[a,b]\subseteq (0,t_0)$, 
\[
\int_a^{b} \frac{|\nabla f(\phi(s))|^2}{2g(s)} ds = -\int_a^b g'(s)ds = g(a)-g(b).
\]
Since $g$ is continuous on $[0,t_0]$, we can take $a\to 0$ and $b\to t_0$, and apply the monotone convergence theorem on the left, to get
\begin{align*}
\int_0^{t_0} \frac{|\nabla f(\phi(s))|^2}{2g(s)} ds &= g(0)-g(t_0)\le g(0)=\sqrt{f(x_0)}.  
\end{align*}
Therefore, by \eqref{phit0x0} and the Cauchy--Schwarz inequality, 
\begin{align*}
|\phi(t_0)-x_0| &\le \biggl(\int_0^{t_0} 2g(s) ds \int_0^{t_0} \frac{|\nabla f(\phi(s))|^2}{2g(s)}  ds \biggr)^{1/2}\\
&\le f(x_0)^{1/4} \biggl(\int_0^{t_0} 2g(s) ds\biggr)^{1/2}.
\end{align*}
On the other hand, since $\phi(s)\in B$ for all $s\le t_0$, Lemma \ref{expboundlmm} shows that 
\[
g(s)\le e^{-\alpha s/2} \sqrt{f(x_0)}
\]
for all $s\le t_0$. Plugging this bound into the previous display, we get
\begin{align*}
|\phi(t_0)-x_0| &\le \sqrt{f(x_0)} \biggl(\int_0^{t_0} 2e^{-\alpha s/2}ds \biggr)^{1/2}\\
&\le\sqrt{f(x_0)} \biggl(\int_0^\infty 2e^{-\alpha s/2}ds \biggr)^{1/2}= 2\sqrt{\frac{f(x_0)}{\alpha}}. 
\end{align*}
But the last quantity is strictly less than $r$, by assumption \eqref{mainassump}. Since $\phi(t_0)\in \partial B$, this gives a contradiction, which proves the lemma.
\end{proof}
We are now ready to prove Theorem \ref{mainthm2}.
\begin{proof}[Proof of Theorem \ref{mainthm2}]
Combining Lemma \ref{phicannot} and Lemma \ref{phicannot2}, we see that $T=\infty$. Moreover by Lemma \ref{phicannot2}, $\phi$ stays in $U$ forever. Therefore, by Lemma \ref{expboundlmm}, it follows that $f(\phi(s))\le e^{-\alpha s} f(x_0)$ for all $s$. It remains to establish the convergence of the flow and the rate of convergence. Let 
\[
t_0 := \inf\{t: f(\phi(t)) = 0\},
\]
with the understanding that $t_0=\infty$ if $f(\phi(t)) >0$ for all $t$. Then note that for any $0\le s < t< t_0$,
\begin{align*}
|\phi(t)-\phi(s)|&\le \int_s^{t} |\nabla f(\phi(u))| du\\
&\le \biggl(\int_s^{t} 2g(u) du \int_s^{t} \frac{|\nabla f(\phi(u))|^2}{2g(u)}  du \biggr)^{1/2},
\end{align*}
where $g(u) = \sqrt{f(\phi(u))}$, as in the proof of Lemma \ref{phicannot2}. As in that proof, note that
\begin{align*}
\int_s^{t} \frac{|\nabla f(\phi(u))|^2}{2g(u)}  du &= g(s)-g(t) \le g(s) = \sqrt{f(\phi(s))}\le e^{-\alpha s/2} \sqrt{f(x_0)}.
\end{align*}
On the other hand,
\[
\int_s^{t} 2g(u) du \le \sqrt{f(x_0)} \int_s^t 2e^{-\alpha u/2}du \le \frac{4\sqrt{f(x_0)}}{\alpha} e^{-\alpha s/2}.
\]
Combining the last three displays, and invoking assumption \eqref{mainassump}, we get
\begin{align}\label{phicauchy}
|\phi(t)-\phi(s)| &\le 2\sqrt{\frac{f(x_0)}{\alpha}} e^{-\alpha s/2}\le re^{-\alpha s/2}.
\end{align}
Note that the bound does not depend on $t$. If $t_0<\infty$, then by Corollary \ref{impcor}, $\phi(t)=\phi(t_0)$ for all $t> t_0$. Thus, in this case $\phi(t)$ converges to $x^* := \phi(t_0)$. The rate of convergence is established by taking $t\to t_0$ in \eqref{phicauchy}. If $t_0=\infty$, then~\eqref{phicauchy} proves the Cauchy property of the flow $\phi$, which shows that $\phi(t)$ converges to some $x^*\in B$ as $t\to\infty$. Again, taking $t\to\infty$ in \eqref{phicauchy} proves the rate of convergence.
\end{proof}

\subsection{Proof of Theorem \ref{mainthm1}}\label{thm1proof}
If $f(x_0)=0$, then $\nabla f(x_0)=0$ and therefore $x_k=x_0$ for all $k$, and there is nothing to prove. So, let us assume that $f(x_0)>0$. 
The following lemma is the key step in the proof of Theorem \ref{mainthm1}. 
\begin{lmm}\label{mainlmm1}
For all $k\ge 0$, $x_k\in B(x_0,r)$.
\end{lmm}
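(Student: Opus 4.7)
I would prove the lemma by induction on $k$. The base case $k=0$ is immediate since $x_0 \in B(x_0,r)$. For the inductive step, assume that $x_0, x_1, \ldots, x_k$ all lie in $B(x_0,r)$; I need to show $x_{k+1} \in B(x_0,r)$. I may assume $f(x_j) > 0$ for all $j \le k$, since otherwise $\nabla f(x_j) = 0$ (as $f$ is nonnegative and $C^2$) and then $x_{j+1} = x_j = \cdots = x_{k+1}$, and the claim is trivial.

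The strategy is to mimic the continuous-time argument of Lemma \ref{phicannot2}, bounding the total distance traveled by the iterates. Writing $|x_{k+1}-x_0| \le \eta \sum_{j=0}^{k}|\nabla f(x_j)|$, I want to control the right-hand side by $2\sqrt{f(x_0)/\alpha}$, which is strictly less than $(1-\ep) r$ by \eqref{epchoice}. First, I would use the single-step bound $|x_{j+1}-x_j| = \eta |\nabla f(x_j)| \le \eta L_1 \sqrt{p} \le r$ (from $\eta \le r/(L_1\sqrt{p})$) to conclude that each segment $[x_j, x_{j+1}]$ for $j \le k$ is contained in $B(x_0, 2r)$, so the Hessian bound applies on these segments. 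Applying a second-order Taylor expansion with remainder and using the entrywise bound $|\partial_i \partial_j f| \le L_2$ on $B(x_0,2r)$, together with the inequality $u^T H u \le L_2 p |u|^2$ for any $u \in \rr^p$ (obtained via Cauchy--Schwarz from $\|u\|_1 \le \sqrt{p}|u|$), I get the descent inequality
\[
f(x_{j+1}) \le f(x_j) - \eta|\nabla f(x_j)|^2 + \tfrac{1}{2}L_2 p \eta^2 |\nabla f(x_j)|^2 \le f(x_j) - (1-\ep)\eta|\nabla f(x_j)|^2,
\]
where the last step uses $\eta \le 2\ep/(L_2 p)$.

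Next comes the key algebraic step that mirrors the continuous proof: since $x_j \in B(x_0,r)$ and $f(x_j)>0$, the definition of $\alpha$ gives $|\nabla f(x_j)| \ge \sqrt{\alpha f(x_j)}$, hence
\[
\eta |\nabla f(x_j)| \le \frac{\eta |\nabla f(x_j)|^2}{\sqrt{\alpha f(x_j)}} \le \frac{f(x_j) - f(x_{j+1})}{(1-\ep)\sqrt{\alpha f(x_j)}}.
\]
Factoring $f(x_j)-f(x_{j+1}) = (\sqrt{f(x_j)}-\sqrt{f(x_{j+1})})(\sqrt{f(x_j)}+\sqrt{f(x_{j+1})}) \le 2\sqrt{f(x_j)}(\sqrt{f(x_j)}-\sqrt{f(x_{j+1})})$ produces the telescoping bound
\[
\eta|\nabla f(x_j)| \le \frac{2}{(1-\ep)\sqrt{\alpha}}\bigl(\sqrt{f(x_j)} - \sqrt{f(x_{j+1})}\bigr).
\]
Summing from $j=0$ to $k$ telescopes to $\frac{2}{(1-\ep)\sqrt{\alpha}}(\sqrt{f(x_0)}-\sqrt{f(x_{k+1})}) \le \frac{2\sqrt{f(x_0)}}{(1-\ep)\sqrt{\alpha}}$, which is strictly less than $r$ by \eqref{epchoice}. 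Therefore $|x_{k+1}-x_0| < r$, completing the induction.

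\textbf{Anticipated obstacle.} The main subtlety is the correct discrete analogue of the continuous-time identity $\frac{d}{ds}\sqrt{f(\phi(s))} = -|\nabla f|^2/(2\sqrt{f})$; this has to be replaced by the factorization $f(x_j)-f(x_{j+1}) \le 2\sqrt{f(x_j)}(\sqrt{f(x_j)}-\sqrt{f(x_{j+1})})$, which only works because $f(x_{j+1}) \le f(x_j)$ (guaranteed for $\eta \le 1/(L_2 p) \le 2\ep/(L_2 p)$ via the descent inequality). A second delicate point is that the Hessian bound must be invoked on $B(x_0, 2r)$ rather than $B(x_0, r)$, which is why the hypothesis supplies $L_2$ on the larger ball and the choice $\eta \le r/(L_1\sqrt{p})$ is needed to guarantee each step stays within that enlarged ball.
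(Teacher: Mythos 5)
Your proof is correct, and it takes a genuinely different route from the paper's for the key estimate. The paper proves Lemma~\ref{mainlmm1} by way of Lemma~\ref{mainsub1}, which bounds $\sum_{l=j}^{k-1}\eta|\nabla f(x_l)|$ via a Cauchy--Schwarz inequality applied to the sum, followed by the geometric decay bound $f(x_l)\le (1-(1-\ep)\alpha\eta)^l f(x_0)$ (Lemma~\ref{fxjbound}) to control a geometric series. You instead bound each term directly: since $x_j\in B(x_0,r)$ gives $|\nabla f(x_j)|\ge\sqrt{\alpha f(x_j)}$, you write $\eta|\nabla f(x_j)|\le \eta|\nabla f(x_j)|^2/\sqrt{\alpha f(x_j)}$, invoke the descent inequality $\eta|\nabla f(x_j)|^2\le (f(x_j)-f(x_{j+1}))/(1-\ep)$, and factor the difference so the sum telescopes. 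This reproduces the same constant $2\sqrt{f(x_0)}/((1-\ep)\sqrt{\alpha})$ but avoids both Cauchy--Schwarz and the decay lemma, which you do not need at all for ball containment; it is a cleaner argument. (It even specializes back to the continuous case: the same manipulation simplifies the proof of Lemma~\ref{phicannot2}.) Two small remarks. First, in your ``anticipated obstacle'' you write that monotonicity is ``guaranteed for $\eta\le 1/(L_2p)\le 2\ep/(L_2p)$''; this chain of inequalities is not correct for $\ep<1/2$, but the issue is moot because the descent inequality $f(x_{j+1})\le f(x_j)-(1-\ep)\eta|\nabla f(x_j)|^2$ itself already implies $f(x_{j+1})\le f(x_j)$, with no further condition. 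Second, your claim that $f(x_j)>0$ can be assumed without loss of generality (otherwise $\nabla f(x_j)=0$ and the iterates stall) matches the paper's handling of the degenerate case at the start of Section~\ref{thm1proof}, and is sound.
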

The proof of this lemma will be carried out via induction on $k$. We have $x_0\in B(x_0,r)$. Suppose that $x_1,\ldots,x_{k-1}\in B(x_0,r)$ for some $k\ge 1$. We will use this hypothesis to show that $x_k\in B(x_0,r)$, with $k$ remaining fixed henceforth.
\begin{lmm}\label{rjboundlmm}
For each $j$, define 
\begin{align}\label{rjdef}
R_j := f(x_{j+1}) - f(x_{j}) + \eta|\nabla f(x_{j})|^2.
\end{align}
Then for all $0\le j\le k-1$, $|R_j|\le \ep \eta |\nabla f(x_j)|^2$. 
\end{lmm}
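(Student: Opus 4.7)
My plan is to expand $f(x_{j+1})$ around $x_j$ by the second-order Taylor formula and identify $R_j$ as the quadratic remainder. Writing $x_{j+1} - x_j = -\eta \nabla f(x_j)$, Taylor's theorem gives
\[
f(x_{j+1}) = f(x_j) - \eta |\nabla f(x_j)|^2 + \tfrac{\eta^2}{2} \nabla f(x_j)^T \hess f(\xi_j) \nabla f(x_j)
\]
for some $\xi_j$ on the segment from $x_j$ to $x_{j+1}$. Hence
\[
R_j = \tfrac{\eta^2}{2} \nabla f(x_j)^T \hess f(\xi_j) \nabla f(x_j),
\]
and everything reduces to bounding this quadratic form.

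The main point to check is that $\xi_j$ lies in $B(x_0, 2r)$, so that the second-derivative bound $L_2$ applies. By the induction hypothesis, $x_j \in B(x_0, r)$ for $j \le k-1$. The componentwise bound $|\partial_i f(x_j)| \le L_1$ gives $|\nabla f(x_j)| \le L_1 \sqrt{p}$, and the step size restriction $\eta \le r/(L_1\sqrt{p})$ yields $|x_{j+1}-x_j| \le \eta |\nabla f(x_j)| \le r$. Thus the whole segment from $x_j$ to $x_{j+1}$ stays in $B(x_0, 2r)$, and $\xi_j$ is in the region where the Hessian is controlled.

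Next I would bound the quadratic form. Since each entry of $\hess f(\xi_j)$ has magnitude at most $L_2$, the operator norm of the Hessian is at most $p L_2$ (bounded crudely by the $\ell^1 \to \ell^\infty$ norm, or the Frobenius norm). Therefore
\[
\bigl|\nabla f(x_j)^T \hess f(\xi_j) \nabla f(x_j)\bigr| \le p L_2 |\nabla f(x_j)|^2,
\]
which combined with the Taylor identity gives
\[
|R_j| \le \frac{\eta^2 p L_2}{2} |\nabla f(x_j)|^2 = \frac{\eta p L_2}{2} \cdot \eta |\nabla f(x_j)|^2.
\]
Finally, the second part of the step-size assumption $\eta \le 2\ep/(L_2 p)$ gives $\eta p L_2/2 \le \ep$, and we conclude $|R_j| \le \ep \eta |\nabla f(x_j)|^2$, as claimed.

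There is no serious obstacle here; the only thing one has to be careful about is the geometry: the induction hypothesis only gives $x_j \in B(x_0,r)$ for $j \le k-1$, not for $x_k$, so the fact that we need $\xi_j$ (and hence possibly $x_k$) only in the larger ball $B(x_0,2r)$ is what makes the choice of bounds in the hypothesis work out. This is exactly why $L_2$ is taken to be a uniform bound on the second derivatives on $B(x_0,2r)$ rather than $B(x_0,r)$.
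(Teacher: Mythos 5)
Your proof is correct and follows essentially the same route as the paper: second-order Taylor expansion to identify $R_j$ as a quadratic form in the gradient evaluated at an intermediate Hessian point, the observation that the one-step geometry together with the bound $\eta \le r/(L_1\sqrt p)$ keeps the segment in $B(x_0,2r)$, and the bound $|\nabla f(x_j)^T \hess f(\xi_j)\nabla f(x_j)| \le pL_2|\nabla f(x_j)|^2$, which the paper derives by the elementary manipulation $\bigl(\sum_i|\partial_i f|\bigr)^2 \le p\sum_i|\partial_i f|^2$ rather than via an operator-norm estimate; the two give the identical constant.
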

\begin{proof}
Since $x_{k-1}\in B(x_0,r)$, the assumed upper bound on $\eta$ implies that 
\[
\eta|\nabla f(x_{k-1})| \le L_1\sqrt{p} \eta \le r. 
\]
Thus, $|x_{k}-x_{k-1}|\le r$, and therefore, $x_{k}\in B(x_0,2r)$. Consequently, the line segment joining $x_{k-1}$ and $x_k$ lies entirely in $B(x_0,2r)$. Since $x_0,\ldots,x_{k-1}\in B(x_0,r)$, the line segments joining $x_{j}$ and $x_{j+1}$ lies in $B(x_0,r)$ for all $j\le k-2$. Thus, by Taylor expansion, we have that for any $0\le j\le k-1$, 
\begin{align*}
R_j &= f(x_{j} - \eta\nabla f(x_j)) - f(x_{j}) + \eta|\nabla f(x_{j})|^2\\
&= \frac{\eta^2}{2} \nabla f(x_j) \cdot \nabla^2f(x_j^*) \nabla f(x_j),
\end{align*}
where $x_j^*$ is a point on the line segment joining $x_j$ and $x_{j+1}$, and $\nabla^2f(x_j^*)$ is the Hessian matrix of $f$ at $x_j^*$. Since $L_2$ is an upper bound on the magnitudes of all second order derivatives of $f$ in $B(x_0,2r)$, and $x_j^*\in B(x_0,2r)$, this gives
\begin{align*}
|R_j| &\le\frac{ L_2\eta^2}{2} \sum_{i,i'=1}^p |\partial_i f(x_j)||\partial_{i'} f(x_j)|\\
&=\frac{ L_2\eta^2}{2} \biggl(\sum_{i=1}^p |\partial_i f(x_j)|\biggr)^2 \le \frac{L_2\eta^2 p}{2} \sum_{i=1}^p |\partial_i f(x_j)|^2,
\end{align*}
which proves that $|R_j|\le \frac{1}{2}L_2p\eta^2 |\nabla f(x_j)|^2$. Since $L_2p\eta \le 2\ep$, this proves the claim.
\end{proof}
\begin{lmm}\label{fxjbound}
We have $(1-\ep)\alpha \eta \le 1$, and for any $0\le j\le k$, 
\begin{align*}
f(x_j) &\le (1-(1-\ep)\alpha \eta)^jf(x_0). 
\end{align*}
\end{lmm}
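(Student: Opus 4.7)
The plan is to extract both conclusions from a single one-step estimate, combined with the nonnegativity of $f$ and the definition of $\alpha$. Starting from the definition \eqref{rjdef} and the bound $|R_j|\le \ep\eta|\nabla f(x_j)|^2$ furnished by Lemma \ref{rjboundlmm}, I obtain, for every $0\le j\le k-1$,
\[
f(x_{j+1}) \le f(x_j) - (1-\ep)\eta\,|\nabla f(x_j)|^2.
\]
This one-step inequality is the only analytic input I will need; everything else is bookkeeping with the definition of $\alpha$ and a short induction.

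To establish $(1-\ep)\alpha\eta\le 1$, I specialize the one-step inequality to $j=0$. Nonnegativity of $f$ then forces $(1-\ep)\eta\,|\nabla f(x_0)|^2 \le f(x_0)$. Since $x_0\in B(x_0,r)$ and $f(x_0)>0$ (the case $f(x_0)=0$ having already been reduced out at the top of this section), the definition of $\alpha$ gives $|\nabla f(x_0)|^2 \ge \alpha f(x_0)$. Substituting and dividing by $f(x_0)$ yields $(1-\ep)\alpha\eta\le 1$. In particular, the factor $1-(1-\ep)\alpha\eta$ is nonnegative, which is what makes the exponential bound meaningful in the first place.

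For the exponential bound, I proceed by induction on $j\in\{0,1,\dots,k\}$. The base case $j=0$ is trivial. Assuming the bound at some $j\le k-1$, I split on whether $f(x_j)$ vanishes. If $f(x_j)=0$, then since $f$ is a nonnegative $C^2$ function attaining its minimum at $x_j$, we must have $\nabla f(x_j)=0$; hence $x_{j+1}=x_j$ and the bound is preserved trivially. If $f(x_j)>0$, the induction hypothesis of Lemma \ref{mainlmm1} (available because $j\le k-1$) gives $x_j\in B(x_0,r)$, and the definition of $\alpha$ then yields $|\nabla f(x_j)|^2\ge \alpha f(x_j)$. Plugging this into the one-step inequality produces $f(x_{j+1})\le (1-(1-\ep)\alpha\eta)\,f(x_j)$, and combining with the induction hypothesis completes the step.

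I do not anticipate any serious obstacle; the only subtle point is a bookkeeping one. The conclusion is asserted up to $j=k$, and reaching $j=k$ requires the recursion at $j=k-1$, which in turn requires $x_{k-1}\in B(x_0,r)$ in order to invoke Lemma \ref{rjboundlmm} and the definition of $\alpha$. This is exactly the hypothesis under which Lemma \ref{mainlmm1} is being proved at stage $k$, so the induction is consistent and the proof closes cleanly.
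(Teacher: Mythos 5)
Your proof is correct and follows essentially the same route as the paper: extract the one-step decay $f(x_{j+1})\le f(x_j)-(1-\ep)\eta|\nabla f(x_j)|^2$ from Lemma \ref{rjboundlmm}, convert it to multiplicative decay via the definition of $\alpha$, iterate, and recover $(1-\ep)\alpha\eta\le 1$ from the $j=0$ case by nonnegativity and $f(x_0)>0$. Your explicit case split on $f(x_j)=0$ is a minor extra care that the paper leaves implicit (the multiplicative inequality holds trivially there since $\nabla f(x_j)=0$), but it does not change the argument.
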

\begin{proof}
Since $x_j\in B(x_0,r)$ for $j\le k-1$, Lemma \ref{rjboundlmm} and the definition of $\alpha$ imply that for $j\le k-1$, 
\begin{align*}
f(x_{j+1}) &= f(x_j) - \eta |\nabla f(x_j)|^2 + R_j\\
&\le f(x_j) -  (1-\ep) \eta |\nabla f(x_j)|^2\\
&\le ( 1- (1-\ep)\alpha \eta)f(x_j).
\end{align*}
Since $f(x_0)>0$ and $f(x_1)\ge 0$, we can take $j=0$ above and divide both sides by $f(x_0)$ to get that $(1-\ep) \alpha \eta \le 1$. Iterating the above inequality gives the desired upper bound for $f(x_j)$.
\end{proof}
\begin{lmm}\label{rjnew}
For each $j\le k-1$, 
\[
f(x_j) - f(x_{j+1}) \ge (1-\ep) \eta |\nabla f(x_j)|^2. 
\]
\end{lmm}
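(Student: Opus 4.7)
The plan is to observe that this lemma is essentially a direct algebraic rearrangement of the bound already established in Lemma \ref{rjboundlmm}, combined with the definition of $R_j$ from \eqref{rjdef}.

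Specifically, I would start from the definition
\[
R_j = f(x_{j+1}) - f(x_j) + \eta|\nabla f(x_j)|^2,
\]
and rewrite it as
\[
f(x_j) - f(x_{j+1}) = \eta|\nabla f(x_j)|^2 - R_j.
\]
Then I would invoke Lemma \ref{rjboundlmm}, which gives $|R_j| \le \ep\eta|\nabla f(x_j)|^2$ for all $0 \le j \le k-1$. In particular $-R_j \ge -\ep\eta|\nabla f(x_j)|^2$, and substituting this into the previous equation yields
\[
f(x_j) - f(x_{j+1}) \ge \eta|\nabla f(x_j)|^2 - \ep\eta|\nabla f(x_j)|^2 = (1-\ep)\eta|\nabla f(x_j)|^2,
\]
which is the desired inequality.

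There is no real obstacle here: the inductive hypothesis that $x_0,\ldots,x_{k-1} \in B(x_0,r)$ (which underlies Lemma \ref{rjboundlmm}) is already in force for this entire block of lemmas, so Lemma \ref{rjboundlmm} applies verbatim to every $j \le k-1$. The only thing worth emphasizing in the write-up is that the conclusion is formally the one-step descent inequality that will drive the telescoping sum used later to control $\sum_j |x_{j+1}-x_j|$ and establish Cauchy convergence of the iterates; but the proof itself is just the two-line calculation above.
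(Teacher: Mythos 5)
Your proof is correct and essentially identical to the paper's: both are the same two-line algebraic rearrangement of the identity defining $R_j$ together with the bound $|R_j|\le\ep\eta|\nabla f(x_j)|^2$ from Lemma \ref{rjboundlmm}. Nothing to add.
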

\begin{proof}
By Lemma \ref{rjboundlmm},
\begin{align*}
\eta |\nabla f(x_j)|^2 &= f(x_j) - f(x_{j+1}) + R_j\\
&\le f(x_j)-f(x_{j+1}) + \ep \eta |\nabla f(x_j)|^2.
\end{align*}
Rearranging terms, we get the desired inequality.
\end{proof}
\begin{lmm}\label{mainsub1}
For all $0\le j\le k-1$, 
\begin{align*}
\sum_{l=j}^{k-1}\eta |\nabla f(x_l)| &\le  (1-(1-\ep)\alpha \eta)^{j/2}\sqrt{\frac{4f(x_0)}{\alpha (1-\ep)^2}}.  
\end{align*}
\end{lmm}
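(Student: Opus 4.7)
The plan is to mirror, in the discrete setting, the Cauchy--Schwarz argument used in the continuous proof of Lemma~\ref{phicannot2}. The continuous proof wrote $|\nabla f(\phi(u))| = \frac{|\nabla f(\phi(u))|}{\sqrt{2g(u)}} \cdot \sqrt{2g(u)}$ with $g(u) = \sqrt{f(\phi(u))}$, applied Cauchy--Schwarz, telescoped one factor using $g'(u) = -|\nabla f(\phi(u))|^2/(2g(u))$, and geometrically bounded the other using the exponential decay of $f(\phi(u))$. I would replicate each of these three ingredients in discrete form.

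First I would set $g_l := \sqrt{f(x_l)}$, and dispose of degenerate indices: if $f(x_l) = 0$ for some $l \le k-1$ then $\nabla f(x_l) = 0$ and all subsequent iterates equal $x_l$, so the tail contribution vanishes and we may assume $g_l > 0$ for the indices that matter. Next, the discrete substitute for the identity $g' = -|\nabla f|^2/(2g)$ is the elementary bound
\[
g_l - g_{l+1} \;=\; \frac{f(x_l) - f(x_{l+1})}{g_l + g_{l+1}} \;\ge\; \frac{f(x_l) - f(x_{l+1})}{2g_l} \;\ge\; \frac{(1-\ep)\eta \, |\nabla f(x_l)|^2}{2g_l},
\]
where the final inequality is Lemma~\ref{rjnew}. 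Rearranging gives $\frac{\eta^2 |\nabla f(x_l)|^2}{2g_l} \le \frac{\eta (g_l - g_{l+1})}{1-\ep}$, which will telescope cleanly.

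Then I would apply Cauchy--Schwarz with the split $\eta|\nabla f(x_l)| = \bigl(\eta|\nabla f(x_l)|/\sqrt{2g_l}\bigr)\cdot \sqrt{2g_l}$, yielding
\[
\biggl(\sum_{l=j}^{k-1}\eta|\nabla f(x_l)|\biggr)^{\!2} \;\le\; \biggl(\sum_{l=j}^{k-1}\frac{\eta^2|\nabla f(x_l)|^2}{2g_l}\biggr)\biggl(\sum_{l=j}^{k-1}2g_l\biggr).
\]
The first factor telescopes to at most $\eta g_j/(1-\ep)$. The second factor is handled by Lemma~\ref{fxjbound}, which gives $g_l \le (1-(1-\ep)\alpha\eta)^{l/2}\sqrt{f(x_0)}$, so the geometric series from $l = j$ to $\infty$ is bounded by $\frac{2}{(1-\ep)\alpha\eta}(1-(1-\ep)\alpha\eta)^{j/2}\sqrt{f(x_0)}$, using the elementary inequality $1-\sqrt{1-x} \ge x/2$ for $x \in [0,1]$ (applicable since Lemma~\ref{fxjbound} guarantees $(1-\ep)\alpha\eta \le 1$). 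Multiplying and again invoking $g_j \le (1-(1-\ep)\alpha\eta)^{j/2}\sqrt{f(x_0)}$ yields exactly the claimed bound after taking square roots.

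I do not anticipate a serious obstacle here: the structure is a direct translation of the continuous argument. The only spot requiring care is the discrete ``chain rule'' step for $g_l$ --- ensuring the $2g_l$ in the denominator (rather than $g_l + g_{l+1}$) --- and the constant $2/(1-\ep)\alpha\eta)$ arising from the geometric series, which is what preserves the factor of $4$ (matching the continuous constant) in the final bound.
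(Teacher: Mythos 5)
Your proposal is correct and is essentially the paper's proof: both use Lemma~\ref{rjnew} to control $\eta|\nabla f(x_l)|^2$ by $f(x_l)-f(x_{l+1})$, apply Cauchy--Schwarz so that one factor telescopes in $\sqrt{f(x_l)}-\sqrt{f(x_{l+1})}$ and the other is a geometric series controlled by Lemma~\ref{fxjbound} via $1-t\le(1-t/2)^2$. The only cosmetic difference is that you invoke $g_l+g_{l+1}\le 2g_l$ before the Cauchy--Schwarz split (folding it into the ``discrete chain rule'' bound on $g_l-g_{l+1}$), whereas the paper factors $f(x_l)-f(x_{l+1})=(g_l+g_{l+1})(g_l-g_{l+1})$ first and replaces $g_l+g_{l+1}$ by $2g_l$ after Cauchy--Schwarz; the resulting constants coincide.
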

\begin{proof}
Let $\kappa := (1-\ep)^{-1}$. By Lemma \ref{rjnew},
\begin{align*}
\eta |\nabla f(x_j)| = \sqrt{\eta^2 |\nabla f(x_j)|^2} &\le \sqrt{\kappa \eta (f(x_j)-f(x_{j+1}))}.
\end{align*}
Also by Lemma \ref{rjnew}, $f(x_j)\ge f(x_{j+1})$ for each $j\le k-1$. Thus, for any $j\le k-1$, we use the Cauchy--Schwarz inequality to get
\begin{align*}
&\sum_{l=j}^{k-1}\eta |\nabla f(x_l)| \le  \sum_{l=j}^{k-1}  \sqrt{\kappa\eta(\sqrt{f(x_l)}+ \sqrt{f(x_{l+1})}) (\sqrt{f(x_l)} - \sqrt{f(x_{l+1})})}\\
&\le \biggl(\sum_{l=j}^{k-1} \kappa \eta (\sqrt{f(x_l)}+\sqrt{f(x_{l+1})}) \sum_{l=j}^{k-1} (\sqrt{f(x_l)}- \sqrt{f(x_{l+1})})\biggr)^{1/2}\\
&= \biggl((\sqrt{f(x_j)} - \sqrt{f(x_k)})\sum_{l=j}^{k-1} \kappa \eta(\sqrt{f(x_l)}+\sqrt{f(x_{l+1})})\biggr)^{1/2}\\
&\le \biggl(2\kappa\eta \sqrt{f(x_j)} \sum_{l=j}^{k-1}  \sqrt{f(x_l)}\biggr)^{1/2}. 
\end{align*}
Using Lemma \ref{fxjbound} to bound the terms  on the right side, we get
\begin{align*}
\sum_{l=j}^{k-1}\eta |\nabla f(x_l)| &\le (1-(1-\ep)\alpha \eta)^{j/4} \sqrt{f(x_0)} \biggl(2\kappa\eta \sum_{l=j}^{k-1}(1-(1-\ep)\alpha \eta)^{l/2}\biggr)^{1/2}. 
\end{align*}
Now, for $t\in [0,1]$, we have the inequality $1-t\le (1-t/2)^2$. This gives 
\begin{align*}
 \sum_{l=j}^{k-1}(1-(1-\ep)\alpha \eta)^{l/2} &\le  (1-(1-\ep)\alpha \eta)^{j/2}\sum_{q=0}^\infty(1-(1-\ep)\alpha \eta)^{q/2} \\
 &\le (1-(1-\ep)\alpha \eta)^{j/2} \sum_{q=0}^{\infty}\biggl(1-\frac{(1-\ep)\alpha \eta}{2}\biggr)^{q}\\
 &= \frac{2(1-(1-\ep)\alpha \eta)^{j/2}}{(1-\ep)\alpha \eta}.
\end{align*}
Plugging this into the previous display completes the proof.
\end{proof}
We are now ready to complete the proof of Lemma \ref{mainlmm1} and then use it prove Theorem~\ref{mainthm1}.
\begin{proof}[Proof of Lemma \ref{mainlmm1}]
Applying Lemma \ref{mainsub1} with $j=0$, and recalling that $\rho<1-\ep$ by construction, we get
\begin{align*}
|x_k - x_0| &\le \sum_{j=0}^{k-1}|x_{j+1}-x_{j}| = \sum_{j=0}^{k-1}\eta |\nabla f(x_j)| \le \sqrt{\frac{4f(x_0)}{\alpha(1-\ep)^2}} < r. 
\end{align*}
This proves that $x_k\in B(x_0,r)$, completing the induction step. 
\end{proof}
\begin{proof}[Proof of Theorem \ref{mainthm1}]
By Lemma \ref{mainlmm1}, we know that $x_k\in B(x_0,r)$ for all $k$. Thus, Lemma~\ref{mainsub1} holds for all $k\ge 1$ and all $j<k$. In particular, 
\begin{align*}
|x_k - x_j| &\le \sum_{l=j}^{k-1} |x_{l+1}-x_l|= \sum_{l=j}^{k-1}\eta |\nabla f(x_l)| \\
&\le  (1-(1-\ep)\alpha \eta)^{j/2}\sqrt{\frac{4f(x_0)}{\alpha(1-\ep)^2}} < r(1-(1-\ep)\alpha \eta)^{j/2}.
\end{align*}
Note that the bound goes to zero as $j\to\infty$, and has no dependence on $k$. Thus, $\{x_k\}_{k\ge 0}$ is a Cauchy sequence in $B(x_0,r)$, and therefore, converges to a limit $x^*\in B(x_0,r)$. Moreover, the above bound is also a bound for $|x^*-x_j|$, since it has no dependence on $k$. Lastly, since $x_k\in B(x_0,r)$ for all $k$, Lemma \ref{fxjbound} also holds for any $j$. This shows that $f(x^*) = 0$ and gives the required bound for~$f(x_j)$. 
\end{proof}

\subsection{Proof of Theorem \ref{deepthm}}\label{deepproof}
Take any $w  = (W_1,b_1,\ldots,W_L, b_L)\in \rr^p$. Define 
\[
f_1(x,w) := \sigma_1(W_1 x + b_1),
\]
and for $2\le \ell \le L$, define 
\[
f_\ell(x,w) := \sigma_\ell(W_\ell\sigma_{\ell-1}(\cdots W_2 \sigma_1(W_1 x+b_1) + b_2\cdots)+b_\ell),
\]
so that $f = f_L$. Note that $f_\ell(\cdot, w)$ is a map from $\rr^d$ into $\rr^{d_\ell}$. Define 
\[
g_1(x,w) := W_1 x+ b_1,
\]
and for $2\le \ell \le L$,  define
\[
g_\ell(x,w) := W_{\ell}f_{\ell-1}(x,w)+b_{\ell},
\]
so that $f_\ell(x,w) = \sigma_{\ell}(g_\ell(x,w))$. Let $D_\ell(x,w)$ be the $d_\ell\times d_{\ell}$ diagonal matrix whose diagonal consists of the elements of the vector $\sigma_\ell'(g_\ell(x,w))$. Let $\partial_{ij} f_\ell$ denote the partial derivative of $f_\ell$ with respect to the $(i,j)^{\mathrm{th}}$ element of $W_1$. Then note that 
\[
\partial_{ij} f_1 = D_1 \partial_{ij} g_1 = D_1 e_ie_j^T x = x_j D_1e_i,
\]
where $e_i\in \rr^{d_1}$ is the vector whose $i^{\mathrm{th}}$ component is $1$ and the rest are zero. Similarly, for $2\le \ell\le L$,
\begin{align*}
\partial_{ij} f_\ell &= D_\ell\partial_{ij} g_\ell = D_\ell W_\ell \partial_{ij} f_{\ell-1}.
\end{align*}
Using this relation and the fact that $D_L=1$, we get
\begin{align}\label{pijfxw}
\partial_{ij} f(x,w) =x_j q_i(x,w),
\end{align}
where
\begin{align}\label{qdef}
q_i(x,w) := W_L D_{L-1}(x,w)W_{L-1} \cdots W_2 D_1(x,w)e_i.
\end{align}
Let $H(w)$ denote the $n\times n$ matrix whose $(i,j)^{\mathrm{th}}$ entry is 
\[
H_{ij}(w) := \nabla_w f(x_i,w)\cdot  \nabla_w f(x_j,w).
\]
Then the definition of $S$ shows that for any $w$ where $S(w)\ne 0$, 
\begin{align}
\frac{|\nabla S(w)|^2}{S(w)} &= \frac{4}{n} \frac{\sum_{i,j=1}^n (y_i-f(x_i,w))(y_j-f(x_j,w)) H_{ij}(w)}{\sum_{i=1}^n (y_i-f(x_i,w))^2}\notag \\
&\ge \frac{4}{n} \lambda(w),\label{sw1}
\end{align}
where $\lambda(w)$ denotes the minimum eigenvalue of $H(w)$. Now note that for any vector $a = (a_1,\ldots,a_n)^T\in \rr^n$ with norm $1$, we have
\begin{align*}
\sum_{i,j=1}^n a_ia_j H_{ij}(w) &= \sum_{i,j=1}^n a_ia_j  \nabla_w f(x_i,w)\cdot  \nabla_w f(x_j,w)\\
&= \biggl|\sum_{i=1}^n a_i \nabla_w f(x_i,w)\biggr|^2= \sum_{j=1}^p \biggl(\sum_{i=1}^n a_i \partial_{w_j} f(x_i,w)\biggr)^2. 
\end{align*} 
We obtain a lower bound on the above term by simply considering those $j$'s that correspond to the entries of $W_1$. By \eqref{pijfxw}, this gives 
\begin{align*}
\sum_{i,j=1}^n a_ia_j H_{ij}(w) &\ge \sum_{r=1}^{d_1}\sum_{s=1}^{d}\biggl(\sum_{i=1}^n a_i \partial_{rs} f(x_i,w)\biggr)^2\\
&=  \sum_{r=1}^{d_1}\sum_{s=1}^{d}\biggl(\sum_{i=1}^n a_i x_{si} q_r(x_i,w)\biggr)^2\ge n\lambda_X \sum_{r=1}^{d_1} \sum_{i=1}^n a_i^2 q_r(x_i,w)^2,
\end{align*}
where $\lambda_X$ is the minimum eigenvalue of $\frac{1}{n}X^TX$, and $X = (x_{si})_{1\le s\le d, 1\le i\le n}$ the $d\times n$ matrix whose $i^{\mathrm{th}}$ column is the vector $x_i$. Since $x_1,\ldots,x_n$ are linearly independent, this matrix is strictly positive definite, which implies that $\lambda_X>0$. Since $|a|=1$, this gives 
\begin{align}\label{sw2}
\lambda(w) &\ge n \lambda_X \sum_{r=1}^{d_1} \min_{1\le i\le n} q_r(x_i,w)^2. 
\end{align}
Now take any $w$ as in the statement of Theorem \ref{deepthm}, that is, 
\begin{itemize}
\item the entries of $W_2,\ldots,W_{L-1}$ are all strictly positive, and 
\item $b_1,\ldots, b_\ell$ and $W_1$ are zero.
\end{itemize}
Then, since $\sigma_1(0)=0$, we have that $f_1(x,w)=0$ for each $x$. By induction, $f_\ell(x,w)=0$ for each $\ell\le L$, and hence $f(x,w)=0$. Thus,
\begin{align}\label{swupper}
S(w) &= \frac{1}{n}\sum_{i=1}^n y_i^2,
\end{align}
irrespective of the values of $W_2,\ldots, W_L$. 
Let $\delta$ be the minimum of all the entries of $W_2,\ldots,W_{L-1}$ and $K$ be the maximum. Take any $w' = (W_1',b_1',\ldots,W_L', b_L')$ such that $|w-w'|\le \delta/2$. Then the entries of $W_2',\ldots,W_{L-1}'$ are all bounded below by $\delta/2$ and bounded above by 
\[
K' := K + \delta/2.
\] 
Moreover, the absolute value of each entry of $W_1'$ and each entry of each $b_\ell'$ is bounded above by $\delta/2$. Let $M$ be the maximum of the absolute values of the entries of the $x_i$'s. Then the absolute value of each entry of each $g_1(x_i,w')$  is bounded above by $a_1 := M\delta d + \delta$.  Thus, the absolute value of each entry of each $f_1(x_i, w')$ is bounded above by $\gamma_1(M\delta d + \delta)$, where 
\[
\gamma_\ell(x) := \max\{\sigma_\ell(x),|\sigma_\ell(-x)|\}.
\]
This shows that the absolute value of each entry of each $g_2(x_i,w')$ is bounded above by $a_2 := \gamma_1(M\delta d + \delta)K' d_1 + \delta$, and hence the absolute value of each entry of each $f_2(x_i,w')$ is bounded above by $\gamma_2(\gamma_1(M\delta d + \delta)K' d_1 + \delta)$. Proceeding inductively like this, we get that for each $\ell \ge 2$, the absolute value of each entry of each $g_\ell(x_i,w')$ is bounded above by 
\[
a_\ell := \gamma_{\ell-1}(\gamma_{\ell-2} \cdots (\gamma_2(\gamma_1(M\delta d + \delta)K' d_1 + \delta) K' d_2 + \delta) + \cdots) K' d_{\ell-1} + \delta.
\]
Thus, each diagonal entry of each $D_\ell(x_i,w')$ is bounded below by 
\[
c_\ell := \min_{|u|\le a_\ell}\sigma_\ell'(u).
\]
Note that $c_\ell>0$, since $\sigma_\ell'$ is positive everywhere and continuous. 
Now let $A> \delta/2$ be a lower bound on the entries of $W_L$. Then the entries of $W_L'$ are bounded below by $A-\delta/2$, and hence, each $q_r(x_i,w')$ is  bounded below by 
\[
(A-\delta/2)  (\delta/2)^{L-2} d_{L-1}d_{L-2}\cdots d_2 c_{L-1} c_{L-2}\cdots c_1.
\]
Thus, by \eqref{sw1} and \eqref{sw2}, we see that for any $w'\in B(w,\delta/2)$ where $S(w') \ne 0$, 
\[
\frac{|\nabla S(w')|^2}{S(w')} \ge 4\lambda_X (A-\delta/2)^2  (\delta/2)^{2L-4} (d_{L-1}d_{L-2}\cdots d_2c_{L-1} c_{L-2}\cdots c_1)^2 d_1.
\]
Since this holds for every $w'\in B(w,\delta/2)$, and the numbers $c_1,\ldots,c_{L-1}$ have no dependence on $A$, it follows that if we fix $\delta$ and $K$, and take $A$ sufficiently large, then by~\eqref{swupper}, we can ensure that 
\[
4S(w) < \frac{\delta^2}{4}\inf_{w'\in B(w,\delta/2), \, S(w')\ne 0} \frac{|\nabla S(w')|^2}{S(w')},
\]
which is the criterion \eqref{mainassump} for this problem. 
By Theorems \ref{mainthm2} and \ref{mainthm1}, this completes the proof of Theorem \ref{deepthm}.

\subsection{Proof of Theorem \ref{newdeepthm}}\label{newproof}
In this proof, $\theta_1,\theta_2,\ldots$ will denote arbitrary positive constants whose values depend only on $c$, $C_1$, $C_2$, $\lambda_X$, $\Lambda_X$, $L$, and  $d_1,\ldots,d_L$. (The important thing is that these constants do not depend on the input dimension $d$ or the sample size $n$.)

Let $M$ be a positive real number, to be chosen later. Let $E$ be the event that the entries of $W_2,\ldots,W_L$ are all in $[2M, 3M]$. Suppose that $E$ has happened. Take any $w' = (W_1',b_1',\ldots,W_L', b_L')\in B(w, M)$. Then by the calculations in the proof of Theorem~\ref{deepthm}, and the fact that $\sigma_\ell'$ is uniformly bounded below by $C_1$ for each $\ell$, we have that 
\begin{align*}
\frac{|\nabla S(w')|^2}{S(w')} &\ge M^{2L-2} \theta_1.
\end{align*}
Next, note that since $\sigma_\ell'$ is uniformly bounded above by $C_2$ and $\sigma(0)=0$, we deduce that  for any $x\in\rr^p$ and $\ell\ge 2$, 
\begin{align*}
|f_\ell(x,w)| &\le C_2 |g_\ell(x,w)| = C_2 |W_\ell f_{\ell-1}(x,w)|\\
&= C_2 \biggl(\sum_{i=1}^{d_\ell} \biggl(\sum_{j=1}^{d_{\ell-1}} (W_{\ell})_{ij} (f_{\ell-1}(x,w))_j\biggr)^2\biggr)^{1/2}\\
&\le C_2\biggl(\sum_{i=1}^{d_\ell} \biggl(\sum_{j=1}^{d_{\ell-1}} (W_{\ell})_{ij}^2\biggr)\biggl(\sum_{j=1}^{d_{\ell-1}} (f_{\ell-1}(x,w))_j^2\biggr)^2\biggr)^{1/2}\\ 
&= C_2|W_\ell| |f_{\ell-1}(x,w)|,
\end{align*}
where $|W_\ell|$ denotes the Euclidean norm of the matrix $W_\ell$ (i.e., the square-root of the sum of squares of the matrix entries). Since $E$ has happened, 
\[
|W_\ell|\le 3M\sqrt{d_\ell d_{\ell-1}}.
\]
Thus, we get
\begin{align*}
|f(x,w)|&\le M^{L-1} \theta_2 |W_1 x|.
\end{align*}
Using this, we have
\begin{align*}
S(w) &\le \frac{2}{n}\sum_{i=1}^n y_i^2 + \frac{2}{n}\sum_{i=1}^n f(x_i,w)^2\\
&\le  \theta_3 +\frac{2  M^{2L-2} \theta_2^2 }{n}\sum_{i=1}^n |W_1 x_i|^2\\
&=  \theta_3 + \frac{2  M^{2L-2} \theta_2^2 }{n}\tr( W_1 XX^T W_1^T)\\
&\le  \theta_3 + 2  M^{2L-2} \theta_2^2\Lambda_X |W_1|^2 = \theta_3 + M^{2L-2} \theta_4 |W_1|^2. 
\end{align*}
Thus, if $E$ happens, and we also have that 
\begin{align*}
\theta_3 + M^{2L-2} \theta_4 |W_1|^2 &< \frac{1}{4} M^{2L} \theta_1, 
\end{align*}
then \eqref{mainassump} holds for $S$ in the ball $B(w,M)$. Looking at the above inequality, it is clear that we can choose $M = \theta_5$ so large that the above event is implied by the event
\[
F := \{|W_1|^2 \le \theta_6\}
\]
for some suitably defined $\theta_6$. Thus, if $E\cap F$ happens, then \eqref{mainassump} holds for $S$ in the ball $B(w,\theta_5)$. Now note that since the entries of $W_1$ are i.i.d.~$\mathcal{N}(0, c/d)$ random variables, 
\begin{align*}
\pp(|W_1|^2 > \theta_6) &\le e^{-d\theta_6/4c}\ee(e^{d|W_1|^2/4c}) \\
&= e^{-d\theta_6/4c}(\ee(e^{Z^2/4}))^{dd_1},
\end{align*}
where $Z\sim \mathcal{N}(0,1)$. Now, when choosing $\theta_5$, we could have chosen it as large as we wanted to, which would let us make $\theta_6$ as large as needed. Making $\theta_6$ large enough ensures that the right side of the above display is bounded above by $e^{-\theta_7d }$. Since the weight matrices are independent across layers, $E$ is independent of $F$, and therefore
\[
\pp(E\cap F)=\pp(E) \pp(F).
\]
In particular, $\pp(E)>0$ for any fixed $M>0$, and its explicit closed form is stated in Theorem~\ref{newdeepthm}. 
Moreover, the above tail bound shows that $\pp(F)\ge 1-e^{-\theta_7 d}$.
This completes the proof.

\end{document}